\documentclass[11pt, a4paper]{article}

\RequirePackage{amsthm,amsmath,amsfonts,amssymb}
\usepackage[margin=1in]{geometry}
\usepackage{authblk} 
\usepackage{bbm}
\usepackage{latexsym}
\usepackage{graphicx}
\usepackage{color,wrapfig}
\usepackage{multirow}
\usepackage{bm,url,comment}
\usepackage{mathtools,mathrsfs}
\usepackage{cool}
\usepackage{array}
\usepackage{adjustbox}
\usepackage{subcaption}
\usepackage{placeins}
\usepackage{setspace}
\usepackage{lmodern}
\usepackage{algorithm}

\usepackage{algorithmic}
\usepackage{float}

\DeclareMathOperator{\tr}{Tr}

\newcommand{\pP}{\mathbb{P}}
\newcommand{\ub}{\mathbf{u}}

\newcommand{\vb}{\mathbf{v}}

\newcommand{\ri}{\mathrm{i}}

\newcommand{\pE}{\mathbb{E}}

\newcommand{\norm}[1]{\left\lVert#1\right\rVert}

\newcommand{\bxi}{\bm{\xi}}
\newcommand{\bzeta}{\bm{\zeta}}

\theoremstyle{plain} 
\newtheorem{theorem}{Theorem}[section]
\newtheorem*{theorem*}{Theorem}

\newtheorem{assumption}[theorem]{Assumption}
\newtheorem*{lemma*}{Lemma}
\newtheorem{corollary}[theorem]{Corollary}
\newtheorem*{corollary*}{Corollary}
\newtheorem{proposition}[theorem]{Proposition}
\newtheorem*{proposition*}{Proposition}

\newtheorem*{definition*}{Definition}
\theoremstyle{remark}

\newtheorem*{example*}{Example}
\newtheorem{remark}[theorem]{Remark}

\newtheorem*{remark*}{Remark}
\newtheorem*{remarks*}{Remarks}

\DeclareMathOperator*{\argmin}{arg\,min}
\usepackage[utf8]{inputenc}
\usepackage{booktabs}
\usepackage{hyperref}
\usepackage{natbib}
\usepackage{xcolor}

\newcommand{\R}{\mathbb{R}}

\newcommand{\ip}[2]{\langle #1, #2 \rangle}

\title{eOptShrinkQ: Near-Lossless KV Cache Compression Through Optimal Spectral Denoising and Quantization}

\author[1]{Pei-Chun Su\thanks{Corresponding author: pei-chun.su@yale.edu}}
\affil[1]{Department of Applied and Computational Mathematics, Yale University, New Haven, CT, USA}
\date{}

\begin{document}

\maketitle

\begin{abstract}
We show that the key-value (KV) cache in transformer attention heads admits a natural decomposition into a low-rank \emph{shared context} component and a full-rank \emph{per-token} residual, well described by the spiked random matrix model. This observation leads to eOptShrinkQ, a two-stage compression pipeline: optimal singular value shrinkage (eOptShrink) automatically extracts the shared structure, and the residual---which satisfies the \emph{thin shell property} with delocalized coordinates---is quantized by TurboQuant~\citep{zandieh2025turboquant}, a recently proposed per-vector scalar quantizer with near-optimal distortion guarantees. By restoring the isotropy that scalar quantization assumes, spectral denoising eliminates the need for both outlier handling and dedicated inner product bias correction, freeing those bits for improved reconstruction.

The theoretical grounding in random matrix theory provides three guarantees: automatic rank selection via the BBP phase transition, provably near-zero inner product bias on the residual, and coordinate delocalization ensuring near-optimal quantization distortion. Experimentally, we validate eOptShrinkQ on Llama-3.1-8B and Ministral-8B across three levels: per-head MSE and inner product fidelity, where eOptShrinkQ saves nearly one bit per entry over TurboQuant at equivalent quality; end-to-end on LongBench (16 tasks), where eOptShrinkQ at $\sim$2.2 bits per entry outperforms TurboQuant at 3.0 bits; and multi-needle retrieval, where eOptShrinkQ at 2.2 bits closely matches or exceeds uncompressed FP16, suggesting that spectral denoising can act as a beneficial regularizer for retrieval-intensive tasks.
\end{abstract}

\section{Introduction}
\label{sec:intro}

\paragraph{The KV cache bottleneck.}
The key-value (KV) cache is a critical memory bottleneck in autoregressive inference with large language models (LLMs). For a model with $L$ layers, $h$ attention heads, model dimension $d_{\text{model}}$, head dimension $d = d_{\text{model}}/h$, and context length $T$, the KV cache stores $2LhTd$ entries in half-precision, consuming gigabytes of GPU memory at long context lengths. Reducing this footprint through quantization has emerged as a practical necessity for long-context deployment. Since attention scores are computed as inner products $\langle q, k \rangle / \sqrt{d}$, the fidelity of these inner products after compression directly determines output quality---making inner product (IP) bias and variance the key metrics for KV cache quantization.

\paragraph{Per-vector scalar quantization.}
Recent work has made remarkable progress on per-vector quantization of high-dimensional vectors. TurboQuant~\citep{zandieh2025turboquant} achieves near-optimal distortion rates by compressing each key or value vector independently: a random orthogonal rotation isotropizes the vector, after which per-coordinate Lloyd-Max scalar quantization~\citep{lloyd1982least,max1960quantizing} achieves near-optimal MSE for the resulting approximately Gaussian coordinates. At 3.5 bits per entry, TurboQuant matches full-precision performance on LongBench~\citep{bai2023longbench} with Llama-3.1-8B-Instruct---a strong baseline that we adopt as our downstream quantizer.

However, per-vector methods treat each vector independently, ignoring the \emph{structured} nature of the KV cache. Within an attention head, a block of $n$ consecutive key or value vectors $\widetilde{S} \in \R^{n \times d}$ is not a collection of independent random vectors---it contains a low-rank component reflecting shared structure across tokens. This shared structure means the quantizer's theoretical assumptions (isotropy on the unit sphere) are not fully satisfied, leading to inner product bias. TurboQuant's QJL correction addresses this at the cost of one bit per coordinate; our approach instead removes the structure before quantization, restoring the isotropy that makes per-vector quantization optimal.

\paragraph{KV cache blocks as spiked random matrices.}
We argue that the spectral structure of KV cache blocks is naturally described by the \emph{spiked matrix model} from random matrix theory~\citep{baik2005phase,limitpaper}. Consider how keys are generated in a single attention head: for token $t$ with hidden state $h_t \in \R^{d_{\text{model}}}$, the key vector is $\widetilde{s}_t = \text{RoPE}(h_t W_K, t) \in \R^d$, where $W_K \in \R^{d_{\text{model}} \times d}$ is the learned key projection and RoPE applies a position-dependent rotation. A block of $n$ consecutive key vectors (the block size) forms the rows of $\widetilde{S} \in \R^{n \times d}$.

Each hidden state $h_t$ carries two types of information: (1)~\emph{local context} $\bar{h}_t$---the component that lies in a low-dimensional subspace shared by neighboring tokens within the block, reflecting common topic, discourse structure, and semantic context, and (2)~\emph{token-specific content} $\epsilon_t$---the unique semantic contribution of that particular token. Writing $h_t = \bar{h}_t + \epsilon_t$, each row of the key block decomposes as:
\begin{equation}\label{eq:key_decomp}
\widetilde{S}[t,:] = \underbrace{R(t) \cdot \bar{h}_t W_K}_{S[t,:]} + \underbrace{R(t) \cdot \epsilon_t W_K}_{Z[t,:]},
\end{equation}
where $R(t)$ is the orthogonal RoPE rotation at position $t$. This gives the spiked model $\widetilde{S} = S + Z$, where the $t$-th row of the signal matrix is $S[t,:] = R(t) \cdot \bar{h}_t W_K$ and the $t$-th row of the residual is $Z[t,:] = R(t) \cdot \epsilon_t W_K$. The signal $S$ is low-rank because the local context vectors $\{\bar{h}_t\}_{t=1}^n$ span a low-dimensional subspace of $\R^{d_{\text{model}}}$---the rank $r$ reflects how many independent contextual directions are present within the block. Both $W_K$ (a deterministic learned projection) and $R(t)$ (an orthogonal rotation) are applied identically to $\bar{h}_t$ and $\epsilon_t$. Since deterministic linear transformations preserve the independence of their inputs, $S$ and $Z$ are independent whenever $\bar{h}_t \perp \epsilon_t$---regardless of $W_K$ or $R(t)$.

\begin{itemize}
    \item The \textbf{signal} $S = \sum_{i=1}^r d_i \ub_i \vb_i^\top$ is low-rank because the local context vectors $\{\bar{h}_t\}$ span a low-dimensional subspace when projected into head space by $W_K$. The right singular vectors $\vb_i \in \R^d$ encode the principal directions of this local subspace---directions that the model has learned to use for attention matching. The left singular vectors $\ub_i \in \R^n$ describe how each token's participation in these shared directions varies across the block; for keys, RoPE imposes smooth positional modulation on $\ub_i$, which predicts that consecutive tokens should share stronger low-rank structure than randomly ordered tokens. Because $S$ is low-rank, it can be stored compactly via its SVD factors rather than redundantly encoded into every row by a per-vector quantizer.

    \item The \textbf{residual} $Z \in \R^{n \times d}$ arises from the per-token variations $\epsilon_t$ projected into head space by $W_K$. We emphasize that $Z$ is \emph{not} noise in the information-theoretic sense: it carries the token-specific information essential for distinguishing individual vectors during attention. However, from the perspective of the spiked model, $Z$ plays the role of the ``noise'' matrix because it lacks the low-rank structure that can be efficiently compressed via SVD. The residual exhibits a \emph{separable covariance structure} $Z = A^{1/2}XB^{1/2}$~\citep{yang2019edge,DY2019}, where $X \in \R^{n \times d}$ has independent entries representing the truly token-specific and dimension-specific variation, $A \in \R^{n \times n}$ captures temporal dependence among the per-token variations (nearby tokens may carry related but distinct unique information due to local context), and $B \in \R^{d \times d}$ captures coordinate-wise covariance of the per-token variations in head space.
\end{itemize}

The independence of $S$ and $Z$ follows from the independence of their sources: $S$ is driven by the local context $\bar{h}_t$ and $Z$ by the per-token variations $\epsilon_t$, with $\bar{h}_t \perp \epsilon_t$. As established above, $W_K$ and $R(t)$ are deterministic transformations that preserve this independence. We validate the spiked model empirically in Section~\ref{sec:llama_kv}, where the predicted spectral behavior (outlier eigenvalues, bulk distribution, and phase transition) closely matches observed KV cache spectra across layers and heads.

For \textbf{value} vectors, the same decomposition applies with $W_V$ replacing $W_K$ and no RoPE rotation: $\widetilde{S}[t,:] = \bar{h}_t W_V + \epsilon_t W_V$. The shared structure $S$ is still present (tokens in a block share semantic context), and the per-token component $Z$ captures what each token contributes to the attention output.

In the spiked model, a phase transition occurs at a critical strength $\alpha$~\citep{baik2005phase}: shared components with $d_i > \alpha$ produce outlier singular values that separate from the bulk spectrum and are recoverable, while weaker components with $d_i \leq \alpha$ are entangled with the token-specific part and cannot be separated by any method. The threshold $\alpha$ depends on the full covariance structure through $A$ and $B$, not just the variance of $Z$---this is why the colored noise extension of optimal shrinkage is essential.

\paragraph{Optimal shrinkage for structured denoising.}
The classical approach to recovering $S$ from $\widetilde{S}$ via truncated SVD is suboptimal: hard thresholding discards weak-but-detectable signals and retains noise-inflated singular values. \emph{Optimal singular value shrinkage}~\citep{gavish2014optimal,donoho2018optimal,nadakuditi2014optshrink} addresses both issues by computing the Frobenius-norm-optimal correction to each singular value, accounting for the aspect ratio and noise distribution. In our prior work~\citep{su2025data}, we developed eOptShrink, which extends the OptShrink framework~\citep{nadakuditi2014optshrink} to handle non-square matrices and colored noise with separable covariance---both essential for KV cache blocks where $n \neq d$ and the noise statistics are non-white. eOptShrink additionally provides fully data-driven rank estimation via the BBP phase transition threshold, without requiring knowledge of the noise covariance structure.

The optimal shrinkage framework under white noise (TRAD) has found wide applications, including diffusion MRI denoising~\citep{veraart2016denoising}, fetal electrocardiogram (fECG) extraction from trans-abdominal maternal ECG~\citep{su2019recovery}, seismic data enhancement~\citep{anvari2018enhancing}, and stimulation artifact removal from intracranial EEG~\citep{alagapan2019diffusion}. In our prior work~\citep{su2025data}, we developed eOptShrink to handle the more realistic setting of colored noise with separable covariance. There, the motivating application was fECG extraction, where the decomposition $\widetilde{S} = S + Z$ separates the dominant maternal ECG ($S$) from the fetal ECG ($Z$)---two physiologically independent signals measured through the same electrode array. Crucially, the ``noise'' $Z$ in that setting is itself a clinically important signal with non-white statistics, and eOptShrink successfully recovers the fetal ECG morphology even under arrhythmia, outperforming white-noise methods. The KV cache compression problem shares this structure: the residual $Z$ carries meaningful per-token information with non-trivial covariance, making eOptShrink a natural fit. The key difference is that in fECG extraction, the goal is to recover $Z$ (the fetal signal); here, we use the decomposition to improve quantization of $Z$ by first removing the low-rank component $S$.

\paragraph{From denoising to compression.}
The connection to quantization becomes transparent through the lens of high-dimensional probability. When the entries of $X$ are i.i.d.\ $\mathcal{N}(0, 1/d)$, each row $x_t \in \R^d$ satisfies the \emph{thin shell property}~\citep{vershynin2018high}: $\|x_t\|_2 \approx 1$ with high probability, and the normalized vector $x_t / \|x_t\|$ is approximately uniformly distributed on the unit sphere $\mathcal{S}^{d-1}$. Moreover, the coordinates are \emph{delocalized}: the largest coordinate satisfies $\|x_t\|_\infty \leq C\sqrt{\log d / d}$ with high probability for an absolute constant $C > 0$, meaning no single coordinate carries disproportionate energy~\citep[Problem~3.26]{vershynin2018high}. Multiplying by the deterministic matrices $A^{1/2}$ and $B^{1/2}$ rotates and stretches this sphere into an ellipsoid but preserves both the norm concentration and the delocalization---the rows of $Z = A^{1/2}XB^{1/2}$ remain well-spread across coordinates with no outlier structure. These are precisely the conditions under which TQ\textsubscript{MSE}'s random rotation plus Lloyd-Max quantization achieves near-optimal distortion: after normalizing each row and applying a Haar rotation, the coordinates become approximately i.i.d.\ $\mathcal{N}(0, 1/d)$, matching the Lloyd-Max codebook.

The problem is the signal $S$: its low-rank structure causes the rows of $\widetilde{S} = S + Z$ to cluster near a low-dimensional subspace, violating the isotropy that TQ\textsubscript{MSE} requires. Our approach removes $S$ via optimal shrinkage, restoring the isotropic structure of $Z$. Let $\hat{S}$ denote the eOptShrink estimate of $S$ (Algorithm~\ref{alg:pipeline}). The residual $R = \widetilde{S} - \hat{S}$ approximates $Z$: its spectral distribution and Frobenius norm converge to those of $Z$ (Proposition~\ref{prop:residual}), and its rows inherit the thin shell and delocalization properties of $Z$ (Corollary~\ref{cor:coord_deloc}). Since $\|R\|_F \approx \|Z\|_F < \|\widetilde{S}\|_F$ whenever $S$ has non-negligible energy, the inner product bias of TQ\textsubscript{MSE}---which scales with the squared norm---is reduced by a factor of $1/(1 + \mathrm{SNR}_t)$ compared to direct quantization (Proposition~\ref{prop:residual}, Part~3), where $\mathrm{SNR}_t := \|S[t,:]\|^2/\|Z[t,:]\|^2$ is the per-token signal-to-noise ratio.

\paragraph{Contributions.} We make the following contributions:

On the theoretical side, we provide the first formalization of KV cache blocks as spiked random matrices: local context ($\bar{h}_t$) spans a low-dimensional subspace producing the low-rank signal $S$, per-token variation ($\epsilon_t$) produces the residual $Z$ with separable covariance, and both are projected through the deterministic $W_K$, preserving their independence. We show that the residual $Z$, whose rows arise from i.i.d.\ sub-Gaussian entries transformed by deterministic covariance matrices, satisfies the \emph{thin shell property}: its rows have concentrated norms and delocalized coordinates, $\|r_t\|_\infty \leq C\sqrt{\log d / d} \cdot \|r_t\|_2$~\citep{vershynin2018high}. The signal $S$ is the sole source of anisotropy that violates this property; removing it via optimal shrinkage restores the regime where per-vector scalar quantization is near-optimal, without requiring outlier handling or coordinate sub-grouping. We prove that this reduces inner product bias by a factor of $1/(1 + \mathrm{SNR}_t)$ (Proposition~\ref{prop:residual}), achieving near-zero bias so that all $b$ bits can be devoted to MSE reconstruction; QJL correction can optionally be added for further improvement. The rank and shrunken singular values are determined automatically by eOptShrink~\citep{su2025data} via the BBP phase transition, with no manual tuning and no knowledge of the noise covariance structure.

On the experimental side, we validate the spiked model predictions---outlier eigenvalues, bulk distribution, and phase transition---on Llama-3.1-8B-Instruct and Ministral-8B-Instruct KV caches across all layers and heads. End-to-end on LongBench (16 tasks), eOptShrinkQ\textsubscript{MSE} at $\sim$2.2 bits achieves 47.4 (Llama) and 48.3 (Ministral), within 1.6 and 2.2 points of FP16 respectively---a 7$\times$ memory reduction with minimal quality loss. Even a simple rank-1 SVD at 2.04 bits matches standalone TQ\textsubscript{prod} at 3.00 bits, demonstrating that spectral denoising is more effective than dedicating an extra bit to QJL correction.

\section{Background}
\label{sec:background}

\subsection{KV Cache in Transformer Attention}

In multi-head attention, each layer computes queries $Q$, keys $K$, and values $V$ from the input hidden states. For an input sequence of length $T$ with head dimension $d$:
\begin{equation}
    \text{Attention}(Q, K, V) = \text{softmax}\!\left(\frac{QK^\top}{\sqrt{d}}\right) V.
\end{equation}
During autoregressive generation, the KV cache stores all previously computed key and value vectors so they need not be recomputed. The cache grows linearly with sequence length, making compression essential for long-context inference.

\subsection{TurboQuant}
\label{sec:turboquant}

TurboQuant~\citep{zandieh2025turboquant} compresses each $d$-dimensional vector $x$ independently:

\begin{enumerate}
    \item \textbf{Norm separation:} Store $r = \norm{x}$ and work with $u = x/r$ on the unit sphere $\mathcal{S}^{d-1}$.
    \item \textbf{Random rotation:} Apply a Haar-distributed random orthogonal matrix $\Pi$: $z = \Pi u$.
    \item \textbf{Scalar quantization:} Quantize each coordinate of $z$ independently using a Lloyd-Max codebook~\citep{lloyd1982least,max1960quantizing} optimized for the marginal distribution $\mathcal{N}(0, 1/d)$ of rotated unit sphere coordinates.
    \item \textbf{Reconstruction:} $\hat{u} = \Pi^\top \hat{z}$, $\hat{x} = r\hat{u}$.
\end{enumerate}

The key insight is that after norm separation, the unit vector $u = x/\|x\|$ lies on $\mathcal{S}^{d-1}$. For $u$ uniformly distributed on $\mathcal{S}^{d-1}$, each coordinate of $z = \Pi u$ follows a marginal distribution that converges to $\mathcal{N}(0, 1/d)$ as $d \to \infty$~\citep[Lemma~1]{zandieh2025turboquant}. The Lloyd-Max codebook, pre-computed for this Gaussian marginal, therefore achieves near-optimal scalar quantization distortion for each coordinate independently. The uniformity assumption on $u$ is critical: if the input vectors concentrate near a low-dimensional subspace rather than being uniformly spread on $\mathcal{S}^{d-1}$, the Gaussian approximation degrades and the codebook becomes suboptimal---a situation we address via spectral denoising.

We refer to this pipeline as \textbf{TurboQuant\textsubscript{MSE}} (denoted TQ\textsubscript{MSE}, Algorithm~\ref{alg:tqmse}): at $b$ bits per coordinate, it minimizes the mean squared reconstruction error $\pE[\|x - \hat{x}_{\text{MSE}}\|^2]$.

\begin{algorithm}[H]
\caption{TurboQuant~\citep{zandieh2025turboquant}: TQ\textsubscript{MSE} and TQ\textsubscript{prod}}
\label{alg:tqmse}
\begin{algorithmic}[1]
\REQUIRE Vector $x \in \R^d$, bit-width $b$, shared Haar rotation $\Pi \in \R^{d \times d}$, Lloyd-Max codebook $\mathcal{C}_b$ for $\mathcal{N}(0, 1/d)$, random projection $\Phi \in \R^{d \times d}$
\STATE \textbf{Output:} $\hat{x}_{\text{MSE}} \leftarrow \text{TQ}_{\text{MSE}}(x, b)$; optionally $\hat{x} \leftarrow \text{TQ}_{\text{prod}}(x, b)$
\STATE \textbf{TQ\textsubscript{MSE}:}
\STATE $r \leftarrow \|x\|$, \quad $u \leftarrow x / r$ \COMMENT{norm separation}
\STATE $z \leftarrow \Pi u$ \COMMENT{random rotation}
\STATE $\hat{z} \leftarrow \mathcal{C}_b(z)$ \COMMENT{per-coordinate Lloyd-Max quantization}
\STATE $\hat{x}_{\text{MSE}} \leftarrow r \cdot \Pi^\top \hat{z}$ \COMMENT{reconstruction, $b$ bits per entry}
\STATE \textbf{TQ\textsubscript{prod} (optional, for unbiased inner products):}
\STATE $r_x \leftarrow x - \hat{x}_{\text{MSE}}$ \COMMENT{quantization residual}
\STATE Store: $\|r_x\|$ and $\text{sign}(\Phi r_x) \in \{-1, +1\}^d$ \COMMENT{$+1$ bit per entry}
\STATE For query $y$: $\ip{y}{\hat{x}} = \ip{y}{\hat{x}_{\text{MSE}}} + \|r_x\| \cdot \frac{\sqrt{\pi/2}}{d} \ip{\Phi y}{\text{sign}(\Phi r_x)}$
\STATE \textbf{Bits per entry:} TQ\textsubscript{MSE}: $b + 16/d$; \quad TQ\textsubscript{prod}: $b + 1 + 16/d$
\end{algorithmic}
\end{algorithm}

TQ\textsubscript{MSE} minimizes mean squared reconstruction error $\pE[\|x - \hat{x}_{\text{MSE}}\|^2]$. TQ\textsubscript{prod} augments it with a 1-bit QJL correction for unbiased inner product estimation:
\begin{equation}
    \ip{y}{\hat{x}} = \ip{y}{\hat{x}_{\text{MSE}}} + \norm{r_x} \cdot \frac{\sqrt{\pi/2}}{d} \ip{\Phi y}{\text{sign}(\Phi r_x)},
    \label{eq:qjl}
\end{equation}
where $r_x = x - \hat{x}_{\text{MSE}}$ is the quantization residual and $\Phi \in \R^{d \times d}$ is a random projection matrix with i.i.d.\ $\mathcal{N}(0, 1)$ entries. The first term $\ip{y}{\hat{x}_{\text{MSE}}}$ uses the MSE reconstruction, which is biased: the Lloyd-Max quantizer systematically shrinks each coordinate toward zero, so $\pE[\ip{y}{\hat{x}_{\text{MSE}}}] < \ip{y}{x}$. The second term corrects this bias by estimating the missing inner product $\ip{y}{r_x}$ between the query and the quantization residual. The key insight is the Johnson--Lindenstrauss property: for a random projection $\Phi$, the quantity $\frac{1}{d}\ip{\Phi y}{\text{sign}(\Phi r_x)}$ is an unbiased estimator of $\frac{2}{\pi}\ip{y}{r_x/\|r_x\|}$~\citep{zandieh2025turboquant}, requiring only 1 sign bit per coordinate to store $\text{sign}(\Phi r_x) \in \{-1, +1\}^d$. The scaling factor $\|r_x\| \cdot \sqrt{\pi/2}/d$ converts this to an unbiased estimate of $\ip{y}{r_x}$, yielding $\pE[\ip{y}{\hat{x}}] = \ip{y}{x}$. The cost is one extra bit per coordinate and a $4\times$ increase in MSE distortion, since only $b$ of the $b+1$ bits are used for reconstruction.

\subsection{Optimal Singular Value Shrinkage}
\label{sec:optshrink_bg}

Given a data matrix $\widetilde{S} = S + Z \in \R^{n \times d}$ where $S = \sum_{i=1}^r d_i \ub_i \vb_i^\top$ is a rank-$r$ signal with signal strengths $d_1 \geq \cdots \geq d_r > 0$, left and right singular vectors $\ub_i \in \R^n$, $\vb_i \in \R^d$, and $Z$ is a noise matrix, the goal is to recover $S$ from $\widetilde{S}$. Let $\widetilde{S} = \sum_{i=1}^{n \wedge d} \widetilde{\sigma}_i \tilde{\bxi}_i \tilde{\bzeta}_i^\top$ be the SVD of the observed matrix, where $\tilde{\lambda}_i := \widetilde{\sigma}_i^2$ are the eigenvalues of $\widetilde{S}\widetilde{S}^\top$. \emph{Singular value shrinkage}~\citep{gavish2014optimal,donoho2018optimal,nadakuditi2014optshrink} constructs an estimate of $S$ by applying a nonlinear \emph{shrinker} $\varphi: [0, \infty) \to [0, \infty)$ to the observed singular values:
\begin{equation}\label{eq:shrinkage_estimator}
\hat{S}_\varphi = \sum_{i=1}^{n \wedge d} \varphi(\widetilde{\sigma}_i) \, \tilde{\bxi}_i \tilde{\bzeta}_i^\top.
\end{equation}
Given a loss function $L_n: \R^{n \times d} \times \R^{n \times d} \to \R_+$ quantifying the discrepancy between $\hat{S}_\varphi$ and $S$ (common choices include the Frobenius norm $\|\hat{S}_\varphi - S\|_F^2/(nd)$, operator norm, and nuclear norm), the \emph{optimal shrinker} is defined as
\begin{equation}\label{eq:optimal_shrinker}
\varphi^* := \argmin_\varphi \lim_{n \to \infty} L_n(\hat{S}_\varphi, S),
\end{equation}
where the limit is taken in the \emph{high-dimensional regime} $n/d \to \beta \in (0, \infty)$~\citep{donoho2018optimal}.

In this regime, the observed singular values $\widetilde{\sigma}_i$ are biased relative to the true signal strengths $d_i$: noise inflates the observed values, and weak signals become entangled with the noise bulk. The \emph{Marchenko--Pastur (MP) law}~\citep{marchenko1967distribution} describes the limiting spectral distribution of $ZZ^\top$ when $Z \in \R^{n \times d}$ has i.i.d.\ entries with mean zero, variance $\sigma^2/n$, and finite fourth moment: the eigenvalues concentrate in a bulk supported on $[\lambda_-, \lambda_+]$, where $\lambda_\pm = \sigma^2(1 \pm \sqrt{\beta})^2$. A \emph{phase transition}~\citep{baik2005phase} occurs at a critical signal strength $\alpha$: signals with $d_i > \alpha$ produce outlier singular values that separate from the bulk and are detectable, while signals with $d_i \leq \alpha$ are undetectable---their singular values ``stick'' to the bulk edge $\sqrt{\lambda_+}$. The optimal shrinker depends on three quantities for each detectable signal ($d_i > \alpha$): the signal strength $d_i$ and the \emph{asymptotic singular vector overlaps}
\begin{equation}\label{eq:overlaps}
a_{1,i} := \lim_{n \to \infty} |\langle \ub_i, \tilde{\bxi}_i \rangle|^2, \qquad a_{2,i} := \lim_{n \to \infty} |\langle \vb_i, \tilde{\bzeta}_i \rangle|^2,
\end{equation}
which quantify how well the noisy singular vectors $\tilde{\bxi}_i$, $\tilde{\bzeta}_i$ approximate the true signal directions $\ub_i$, $\vb_i$. For the three standard loss functions~\citep{gavish2014optimal,leeb2020optimal}:
\begin{equation}\label{eq:shrinkers}
\varphi^*_i = \begin{cases}
d_i \sqrt{a_{1,i} a_{2,i}} & \text{(Frobenius norm)}, \\[3pt]
d_i \sqrt{\frac{a_{1,i} \wedge a_{2,i}}{a_{1,i} \vee a_{2,i}}} & \text{(operator norm)}, \\[3pt]
d_i \big(\sqrt{a_{1,i} a_{2,i}} - \sqrt{(1 - a_{1,i})(1 - a_{2,i})}\big) & \text{(nuclear norm)},
\end{cases}
\end{equation}
and $\varphi^*_i = 0$ when $d_i \leq \alpha$. Hard thresholding (truncated SVD) is a special case that retains the inflated $\widetilde{\sigma}_i$ without correction---suboptimal because the retained values are biased upward by noise and the rank must be chosen manually.

Since the high-dimensional regime requires $n$ and $d$ to be comparable ($n/d \to \beta$), we partition the KV cache into blocks of $n$ tokens with $n$ comparable to $d$, and apply shrinkage independently to each block. This block-wise processing is natural for the streaming KV cache: as tokens arrive in chunks during prefill, each chunk forms a block that is compressed before the next chunk is processed.

When the noise has a separable covariance structure $Z = A^{1/2}XB^{1/2}$ (as in KV cache blocks, where $A$ captures temporal dependence and $B$ captures coordinate covariance), the MP law no longer applies and the bulk edge $\lambda_+$ depends on both $A$ and $B$. Standard optimal shrinkage methods~\citep{gavish2014optimal,donoho2018optimal} assume white noise ($A = B = I$) and fail in this setting. The eOptShrink estimator~\citep{su2025data} extends the OptShrink framework~\citep{nadakuditi2014optshrink} to handle colored noise with separable covariance, non-square matrices, and automatic rank determination---all essential for KV cache blocks where $n \neq d$ and the residual statistics are non-white. The algorithm (Algorithm~\ref{alg:eoptshrink}) is fully data-driven. It estimates the bulk edge $\hat{\lambda}_+$ and effective rank $\hat{r}^+$ from the observed spectrum, recovers the noise spectral distribution $\hat{F}_e$ by imputing the eigenvalues perturbed by signals, and computes the data-driven estimates of $d_i$, $a_{1,i}$, $a_{2,i}$. Central to the algorithm is the \emph{$D$-transform}~\citep{limitpaper}:
\begin{equation}\label{eq:d_transform}
\mathcal{T}(z) := z \, m_{1c}(z) \, m_{2c}(z),
\end{equation}
where $m_{1c}(z)$ and $m_{2c}(z)$ are the Stieltjes transforms of the asymptotic limiting spectral measures of $ZZ^\top$ and $Z^\top Z$ respectively. The Stieltjes transform of a measure $\mu$ is $m_\mu(z) := \int (x - z)^{-1} d\mu(x)$ for $z \in \mathbb{C}^+$; see~\citep[Section~2]{su2025data} for the precise definitions of $m_{1c}$, $m_{2c}$ in terms of the noise covariance matrices $A$ and $B$. Since neither $A$, $B$, nor the noise spectral distribution are known, eOptShrink estimates them from the data. The estimated noise spectral distribution $\hat{F}_e$ is the empirical CDF of the estimated noise eigenvalues:
\begin{equation}\label{eq:Fe}
\hat{F}_e(x) = \frac{1}{n \wedge d} \sum_{i=1}^{n \wedge d} \mathbf{1}\{\hat{\lambda}_{e,i} \leq x\},
\end{equation}
where $\hat{\lambda}_{e,i} = \tilde{\lambda}_i$ for $i > \hat{r}^+ + k$ (non-outlier, non-perturbed eigenvalues used directly), and the top $\hat{r}^+ + k$ eigenvalues are replaced by imputed values: the $\hat{r}^+$ outlier eigenvalues are discarded, and the next $k$ eigenvalues (perturbed by proximity to the outliers) are imputed using the square-root edge behavior of the bulk (Step~2 of Algorithm~\ref{alg:eoptshrink}). The estimated Stieltjes transforms $\hat{m}_{e,1,i}$, $\hat{m}_{e,2,i}$ and $D$-transform $\widehat{\mathcal{T}}_{e,i} := \tilde{\lambda}_i \, \hat{m}_{e,1,i} \, \hat{m}_{e,2,i}$ are then evaluated from $\hat{F}_e$. For each $1 \leq i \leq \hat{r}^+$, the estimators are~\citep{su2025data}:
\begin{equation}\label{eq:d_a_estimates}
\hat{d}_{e,i} = \frac{1}{\sqrt{\widehat{\mathcal{T}}_{e,i}}}, \quad \hat{a}_{e,1,i} = \frac{\hat{m}_{e,1,i}}{\hat{d}_{e,i}^2 \, \widehat{\mathcal{T}}'_{e,i}}, \quad \hat{a}_{e,2,i} = \frac{\hat{m}_{e,2,i}}{\hat{d}_{e,i}^2 \, \widehat{\mathcal{T}}'_{e,i}},
\end{equation}
where $\widehat{\mathcal{T}}'_{e,i}$ is the derivative of $\widehat{\mathcal{T}}_e$ at $\tilde{\lambda}_i$. These are plugged into~\eqref{eq:shrinkers} to obtain the shrunken values $\hat{\varphi}_{e,i}$, yielding $\hat{S} = \sum_{i=1}^{\hat{r}^+} \hat{\varphi}_{e,i} \, \tilde{\bxi}_i \tilde{\bzeta}_i^\top$. Crucially, the algorithm requires no knowledge of $A$, $B$, or the rank $r$---all are estimated from the data.

\begin{algorithm}[H]
\caption{eOptShrink (adapted from~\citep{su2025data})}
\label{alg:eoptshrink}
\begin{algorithmic}[1]
\REQUIRE Data matrix $\widetilde{S} \in \R^{n \times d}$ with SVD $\widetilde{S} = \sum_{i=1}^{n \wedge d} \widetilde{\sigma}_i \tilde{\bxi}_i \tilde{\bzeta}_i^\top$, constant $c = \min(1/2.01, \, 1/\log\log d)$, loss function $L$.
\STATE \textbf{Output:} $\hat{S}, \hat{r}^+ \leftarrow \text{eOptShrink}(\widetilde{S})$
\STATE \textbf{Step 1: Estimate effective rank.}
\STATE Set $k = \lfloor d^c \rfloor$. Estimate the bulk edge:
\[
\hat{\lambda}_+ = \tilde{\lambda}_{k+1}^2 + \frac{1}{2^{2/3}-1}\left(\tilde{\lambda}_{k+1}^2 - \tilde{\lambda}_{2k+1}^2\right).
\]
\STATE Set $\hat{r}^+ = |\{i : \tilde{\lambda}_i^2 / \hat{\lambda}_+ - 1 > d^{-1/3}\}|$.
\STATE \textbf{Step 2: Estimate noise spectral distribution.}
\STATE Impute the top $k$ noise eigenvalues using the square-root edge behavior:
\[
\hat{\lambda}_{j+\hat{r}^+}^2 = \tilde{\lambda}_{k+\hat{r}^++1}^2 + \frac{1 - (j/k)^{2/3}}{2^{2/3}-1}\left(\tilde{\lambda}_{k+\hat{r}^++1}^2 - \tilde{\lambda}_{2k+\hat{r}^++1}^2\right)
\]
for $j = 1, \ldots, k$. Construct the estimated noise CDF $\hat{F}_e(x)$.
\STATE \textbf{Step 3: Compute optimal shrinker.}
\STATE For $1 \leq i \leq \hat{r}^+$: estimate signal strength $\hat{d}_i$ and overlaps $\hat{a}_{1,i}$, $\hat{a}_{2,i}$ via the $D$-transform of $\hat{F}_e$.
\STATE Compute the shrunken value $\hat{\varphi}_{e,i}$ for the chosen loss $L$ (e.g., $\hat{\varphi}_{e,i} = \hat{d}_{e,i}\sqrt{\hat{a}_{e,1,i}\hat{a}_{e,2,i}}$ for Frobenius).
\STATE \textbf{Return:} $\hat{S} = \sum_{i=1}^{\hat{r}^+} \hat{\varphi}_{e,i} \, \tilde{\bxi}_i \tilde{\bzeta}_i^\top$, $\hat{r}^+$.
\end{algorithmic}
\end{algorithm}

\section{Method}
\label{sec:method}

The eOptShrinkQ pipeline processes the KV cache in blocks of $n$ tokens (Algorithm~\ref{alg:pipeline}). Each block $\widetilde{S} \in \R^{n \times d}$ is first denoised via eOptShrink (Algorithm~\ref{alg:eoptshrink}), which automatically determines the rank $\hat{r}^+$ and computes the Frobenius-norm-optimal shrunken singular values. If no eigenvalue exceeds the bulk edge ($\hat{r}^+ = 0$), the block is passed directly to TQ\textsubscript{MSE} without SVD preprocessing. Otherwise, the low-rank estimate $\hat{S}$ is subtracted, and the residual $R = \widetilde{S} - \hat{S}$ is quantized row-by-row via TQ\textsubscript{MSE} (Algorithm~\ref{alg:tqmse}). Optionally, QJL correction can be added for keys (upgrading to TQ\textsubscript{prod}), and SVD factors can be quantized at $b_s$ bits to reduce overhead.

\begin{algorithm}[H]
\caption{eOptShrinkQ: Structured KV Cache Compression}
\label{alg:pipeline}
\begin{algorithmic}[1]
\REQUIRE Data block $\widetilde{S} \in \R^{n \times d}$ (keys or values), residual bit-width $b$, SVD factor bit-width $b_s$
\STATE $\hat{S}, \hat{r}^+ \leftarrow \text{eOptShrink}(\widetilde{S})$ \COMMENT{Algorithm~\ref{alg:eoptshrink}}
\IF{$\hat{r}^+ > 0$}
    \STATE (Optional) Quantize SVD factors of $\hat{S}$ at $b_s$ bits via adaptive Lloyd-Max
    \STATE $R \leftarrow \widetilde{S} - \hat{S}$ \COMMENT{isotropic residual}
\ELSE
    \STATE $R \leftarrow \widetilde{S}$ \COMMENT{no signal; skip SVD}
\ENDIF
\FOR{each row $r_t$ of $R$}
    \STATE $\hat{r}_t \leftarrow \text{TQ}_{\text{MSE}}(r_t, b)$ \COMMENT{Algorithm~\ref{alg:tqmse}}
\ENDFOR
\STATE (Optional) For keys: apply QJL correction to upgrade TQ\textsubscript{MSE} to TQ\textsubscript{prod} (Eq.~\ref{eq:qjl}), adding 1 bit per entry
\STATE Store: shrunken SVD factors of $\hat{S}$ ($\hat{r}^+$ components), TQ indices and norms for $R$, (optional) QJL signs
\STATE \textbf{Bits per entry:} $b + \hat{r}^+(n{+}d) b_s / (nd)$ (eOptShrinkQ\textsubscript{MSE}); \; $b + 1 + \hat{r}^+(n{+}d) b_s / (nd)$ (eOptShrinkQ\textsubscript{prod})
\end{algorithmic}
\end{algorithm}

\begin{remark}[Automatic rank selection]\label{rem:rank}
\label{sec:shrinkage}
\label{sec:bias}
Rather than using a fixed rank, eOptShrink determines the rank and shrunken singular values automatically from the data via the BBP phase transition threshold, without requiring knowledge of the noise covariance. When no eigenvalue exceeds the bulk edge ($\hat{r}^+ = 0$), the block is entirely isotropic and passed directly to TQ\textsubscript{MSE}---avoiding unnecessary SVD overhead on blocks without detectable low-rank structure. In all experiments, we use the Frobenius norm loss and set $c = \min(1/2.01, \, 1/\log(\log d))$; for $d = 128$ this gives pilot parameter $k = 11$. SVD factors are quantized at 4 bits via data-adaptive Lloyd-Max codebooks. The bit budget formula in Algorithm~\ref{alg:pipeline} accounts for the residual quantization ($b$ bits per entry) and the SVD factor overhead ($\hat{r}^+(n{+}d) b_s / (nd)$). Two additional small costs are shared across all TQ-based methods and cancel in comparisons: (1)~TQ stores one FP16 norm per row ($16/d$ bits per entry, i.e., $0.125$ for $d = 128$), and (2)~the shrunken singular values $\hat{\Sigma}$ are stored at FP16 ($16\hat{r}^+/(nd)$ bits per entry, negligible since $\hat{r}^+ \ll \min(n,d)$).
\end{remark}

\section{Numerical Experiments}
\label{sec:experiments}

We conduct numerical experiments on two model families to validate the compression pipeline: Llama-3.1-8B-Instruct~\citep{llama3} ($d = 128$, 32 layers $\times$ 8 KV heads) and Ministral-8B-Instruct~\citep{mistral_ministral} ($d = 128$, 36 layers $\times$ 8 KV heads). We first evaluate per-head reconstruction quality (Section~\ref{sec:llama_kv}), then end-to-end performance on LongBench and multi-needle retrieval (Section~\ref{sec:longbench}).

\paragraph{Methods compared.} We compare the following approaches:
\begin{itemize}
    \item \textbf{KIVI}~\citep{liu2024kivi}: per-channel asymmetric quantization for keys (axis=0) and per-token for values (axis=1), with group size 64. This is the most widely adopted KV cache quantization baseline. Total bits include the group-wise scale and zero-point overhead ($32/g$ bits per entry for group size $g$).
    \item \textbf{TQ\textsubscript{MSE}}: standalone TurboQuant~\citep{zandieh2025turboquant} with MSE-optimal Lloyd-Max quantization (Section~\ref{sec:turboquant}).
    \item \textbf{TQ\textsubscript{prod}}: TurboQuant with $b$-bit MSE plus 1-bit QJL correction for unbiased inner products ($b+1$ total bits).
    \item \textbf{SVD$_{r=1}$ + TQ\textsubscript{MSE}}: rank-1 truncated SVD to remove the leading singular component, then TQ\textsubscript{MSE} on the residual. This baseline isolates the effect of removing even a single dominant direction, at negligible bit overhead (0.04 bits), and demonstrates the value of spectral denoising in its simplest form.
    \item \textbf{eOptShrinkQ\textsubscript{MSE}}: optimal shrinkage with data-adaptive rank (Section~\ref{sec:shrinkage}), then TQ\textsubscript{MSE} on the residual. Optionally, \textbf{eOptShrinkQ\textsubscript{prod}} replaces TQ\textsubscript{MSE} with TQ\textsubscript{prod} on the residual for additional QJL correction.
\end{itemize}
SVD factors are quantized at 4 bits in all experiments. All blocks use sequential (causal) token ordering.

\paragraph{No sub-grouping or outlier handling.} Unlike the original TurboQuant implementation~\citep{zandieh2025turboquant} and other quantization approaches that partition coordinates into sub-vectors of different sizes and handle outlier channels separately, we apply TQ\textsubscript{MSE} to the full $d$-dimensional vector without sub-grouping. After eOptShrink removes the low-rank signal, the residual rows satisfy the delocalization bound $\|r_t\|_\infty \leq C\sqrt{\log d / d} \cdot \|r_t\|_2$ (Corollary~\ref{cor:coord_deloc}): no coordinate is an outlier, so sub-grouping is unnecessary. For fair comparison, all methods---including the TQ\textsubscript{MSE} and TQ\textsubscript{prod} baselines---use the same full-vector quantization without sub-grouping.

\subsection{KV Cache Compression Quality}
\label{sec:llama_kv}

We run Llama-3.1-8B-Instruct~\citep{llama3} and Ministral-8B-Instruct~\citep{mistral_ministral} on a LongBench~\citep{bai2023longbench} QAsper passage ($\sim$3000 tokens) and extract the full KV cache after prefill. We evaluate compression across all layers and KV heads (Llama: 32 layers $\times$ 8 heads $=$ 256 heads; Ministral: 36 layers $\times$ 8 heads $=$ 288 heads), with $d = 128$ and blocks of $128 \times 128$.

\paragraph{Spiked model validation and spectral denoising.}
We verify that the spiked model $\widetilde{S} = S + Z$ accurately describes the spectral structure of KV cache blocks, and visualize the effect of eOptShrink. In the spiked model, the \emph{effective rank} $r^+$ is the number of signal singular values strong enough to separate from the noise bulk (i.e., exceeding the BBP phase transition threshold $\alpha$; see Appendix~\ref{app:assumptions}). eOptShrink estimates this as $\widehat{r}^+$---the number of singular values exceeding the estimated bulk edge $\sqrt{\widehat\lambda_+}$---without requiring knowledge of the noise distribution (Algorithm~\ref{alg:eoptshrink}). Figures~\ref{fig:llama_keys} and~\ref{fig:ministral_keys} show representative key heads on Llama (L15H3) and Ministral (L5H3): the top row displays the singular value spectrum and distribution, while the bottom row shows heatmaps of the original block $\widetilde{S}$, the extracted signal $\hat{S}$, and the residual $R = \widetilde{S} - \hat{S}$. In both models, outlier singular values clearly separate above the bulk edge, and the heatmaps reveal prominent vertical stripes in $\widetilde{S}$---per-channel outliers that KIVI's per-channel quantization is designed to handle. The rank-$\hat{r}^+$ signal $\hat{S}$ captures precisely these stripes, and the residual $R$ is visually isotropic with no discernible structure. This confirms that eOptShrink removes the anisotropy at its source rather than engineering around its symptoms.

Figures~\ref{fig:llama_values} and~\ref{fig:ministral_values} show the corresponding analysis for value heads (Llama L31H0, Ministral L18H1). The spiked structure is also present, with outlier singular values separating from the bulk. The heatmaps show visible column and row patterns in $\widetilde{S}$, captured by $\hat{S}$ and absent from $R$. The bulk edge estimator $\sqrt{\widehat\lambda_+}$ correctly delineates the boundary for both keys and values on both models, confirming that eOptShrink's automatic rank estimation generalizes across model families.

\begin{figure}[t]
\centering
\begin{subfigure}{\textwidth}
\includegraphics[width=\textwidth]{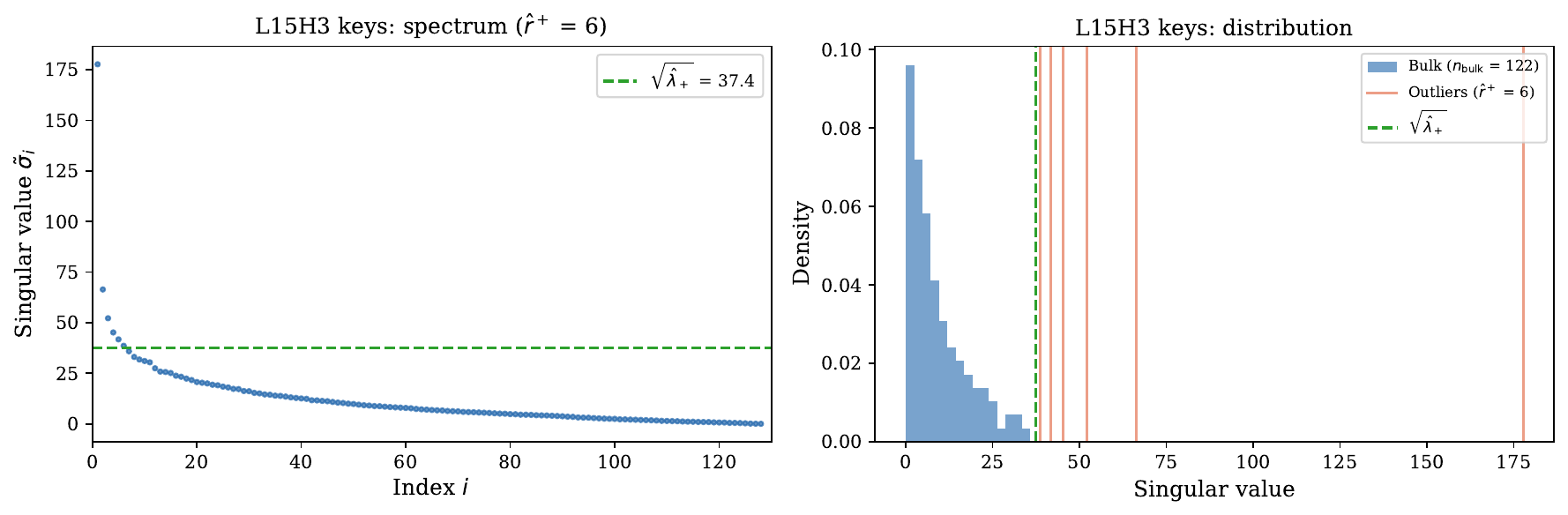}
\end{subfigure}\\[4pt]
\begin{subfigure}{\textwidth}
\includegraphics[width=\textwidth]{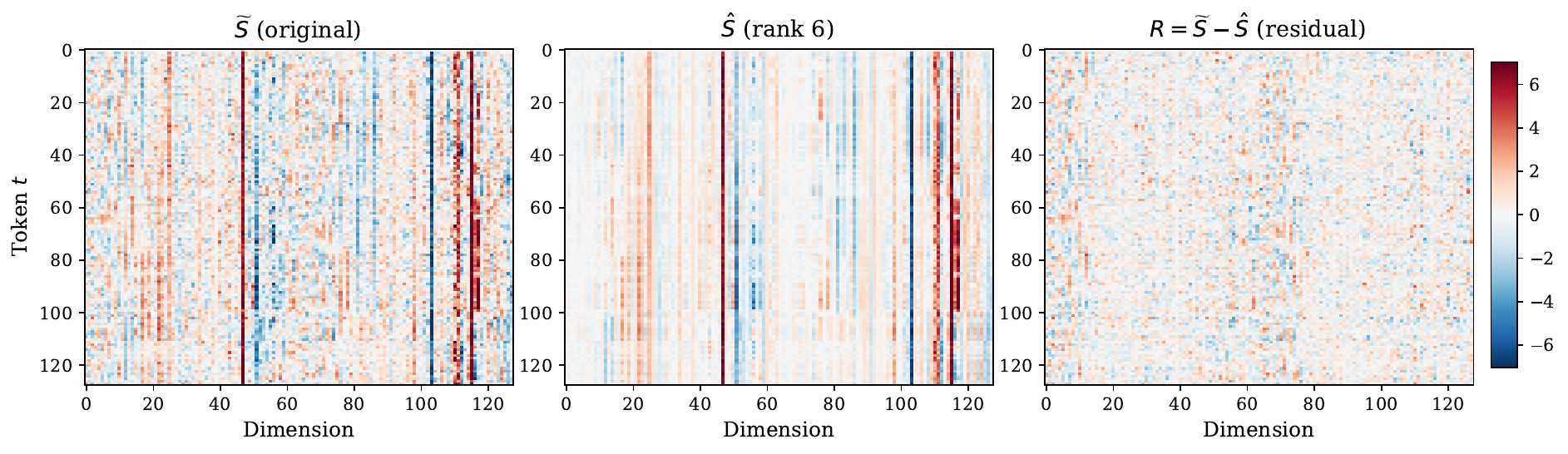}
\end{subfigure}
\caption{Llama-3.1-8B \textbf{key} head L15H3 ($128 \times 128$). \emph{Top:} Singular value spectrum (left) and distribution (right); outlier singular values clearly separate above the bulk edge $\sqrt{\widehat\lambda_+}$. \emph{Bottom:} The original block $\widetilde{S}$ exhibits vertical stripes from per-channel outliers---the anisotropy that KIVI's per-channel quantization is designed to handle. The rank-$\hat{r}^+$ signal $\hat{S}$ captures these stripes; the residual $R = \widetilde{S} - \hat{S}$ is visually isotropic with no discernible structure.}
\label{fig:llama_keys}
\end{figure}

\begin{figure}[t]
\centering
\begin{subfigure}{\textwidth}
\includegraphics[width=\textwidth]{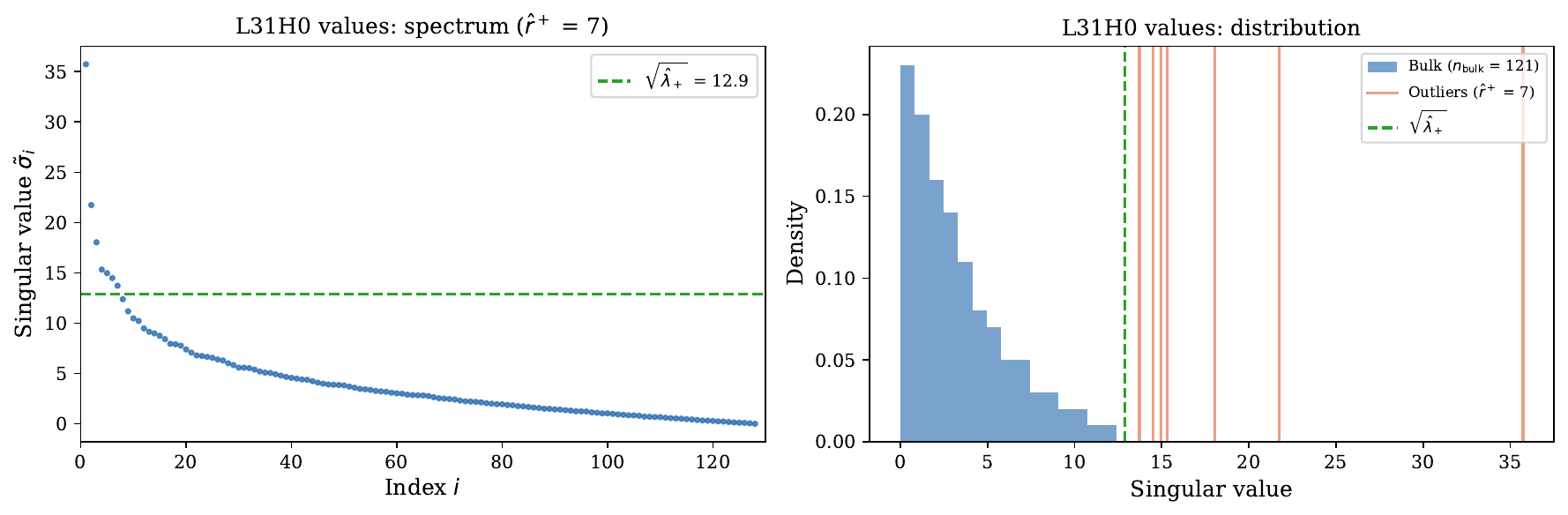}
\end{subfigure}\\[4pt]
\begin{subfigure}{\textwidth}
\includegraphics[width=\textwidth]{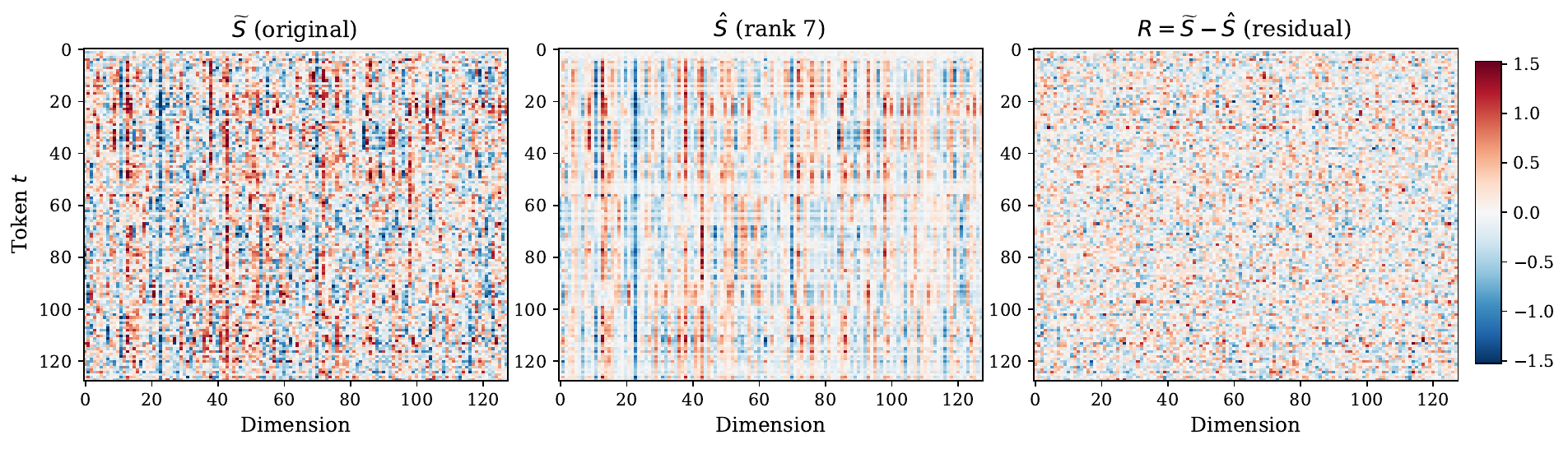}
\end{subfigure}
\caption{Llama-3.1-8B \textbf{value} head L31H0 ($128 \times 128$). Same format as Figure~\ref{fig:llama_keys}. The spiked structure is present for values as well. The residual $R$ is again structure-free.}
\label{fig:llama_values}
\end{figure}

\begin{figure}[t]
\centering
\begin{subfigure}{\textwidth}
\includegraphics[width=\textwidth]{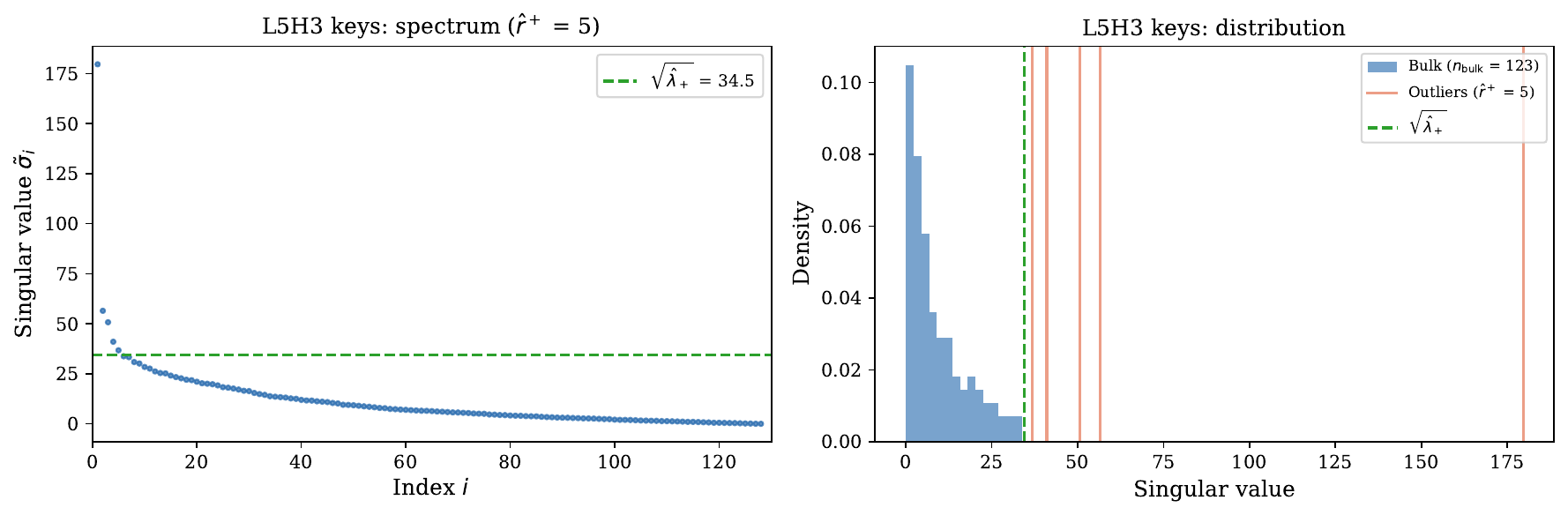}
\end{subfigure}\\[4pt]
\begin{subfigure}{\textwidth}
\includegraphics[width=\textwidth]{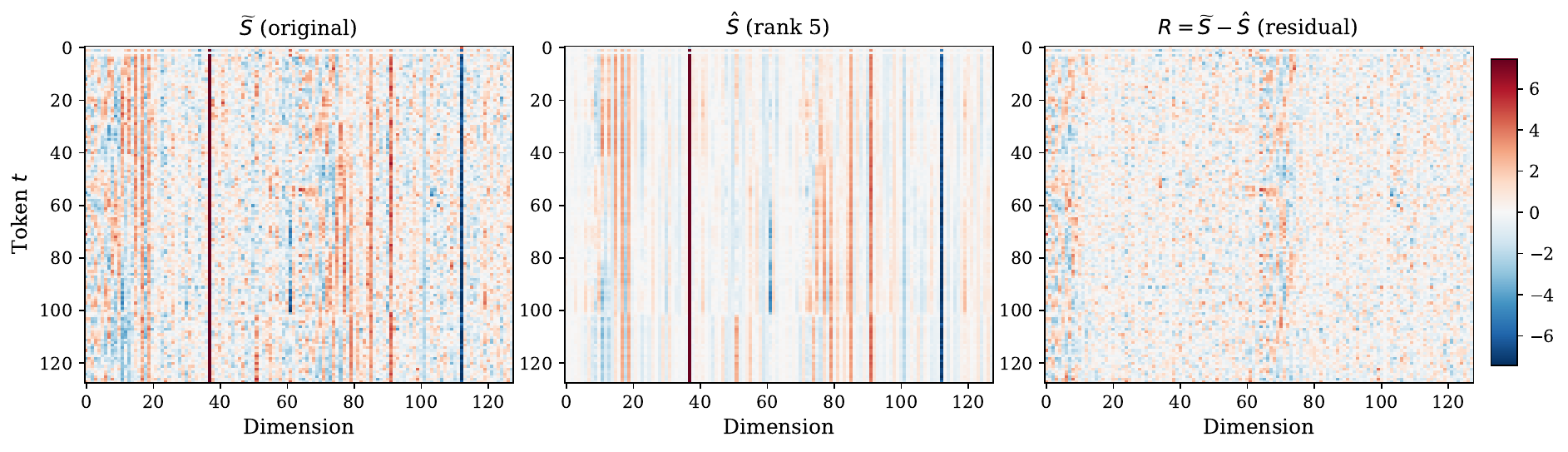}
\end{subfigure}
\caption{Ministral-8B \textbf{key} head L5H3 ($128 \times 128$). Same format as Figure~\ref{fig:llama_keys}. The spiked structure and its removal by eOptShrink are consistent across model families.}
\label{fig:ministral_keys}
\end{figure}

\begin{figure}[t]
\centering
\begin{subfigure}{\textwidth}
\includegraphics[width=\textwidth]{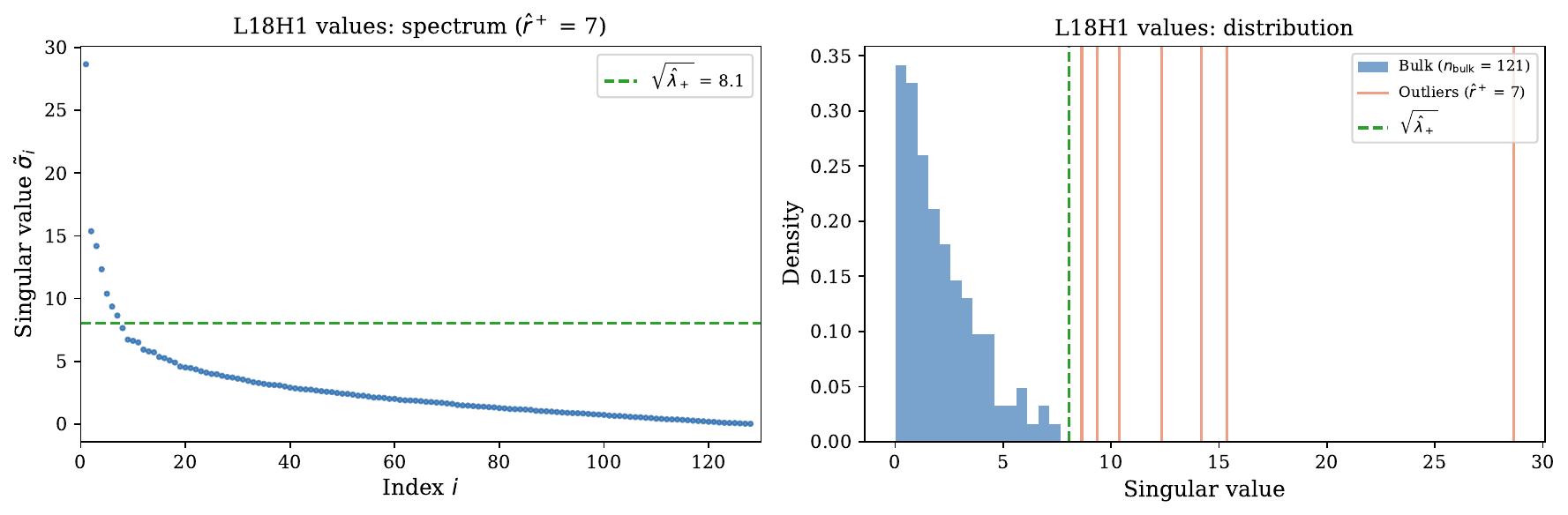}
\end{subfigure}\\[4pt]
\begin{subfigure}{\textwidth}
\includegraphics[width=\textwidth]{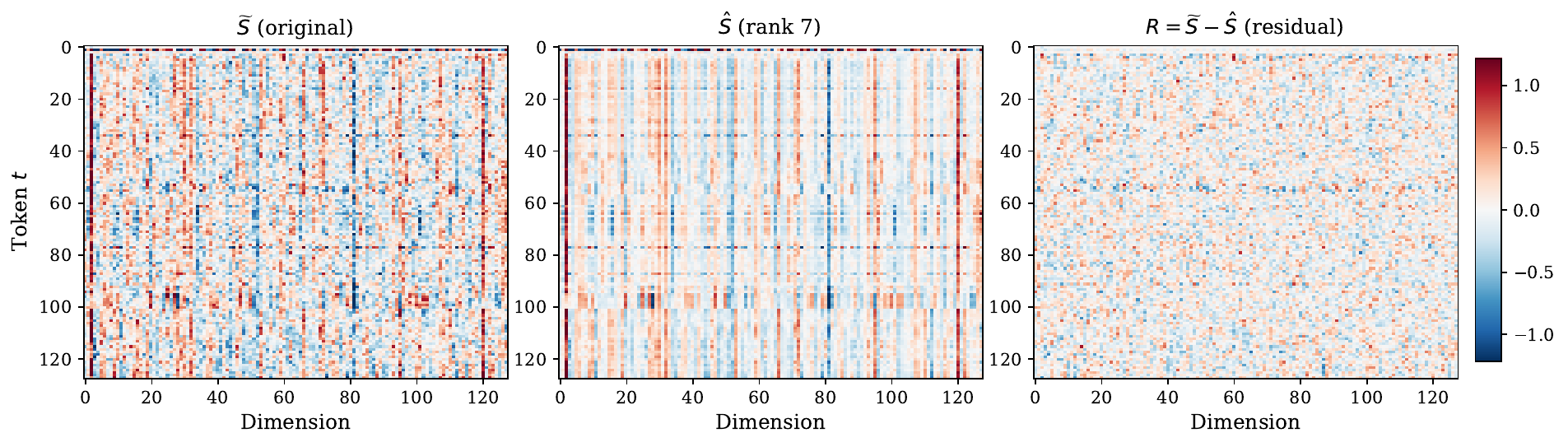}
\end{subfigure}
\caption{Ministral-8B \textbf{value} head L18H1 ($128 \times 128$). Same format as Figure~\ref{fig:llama_values}. The spiked structure and isotropic residual are consistent across model families.}
\label{fig:ministral_values}
\end{figure}

We report relative $L_2$ error $\|\hat{X} - X\|_F / \|X\|_F$ (in percent), inner product (IP) bias, and IP standard deviation on unit-normalized vectors (cosine similarity error). These per-head metrics isolate quantization fidelity from confounding factors in downstream evaluation (prompt formatting, generation strategy, task-specific scoring) and enable direct comparison across methods with different quantization strategies. Tables~\ref{tab:llama_keys} and~\ref{tab:llama_values} show the results for Llama; Tables~\ref{tab:ministral_keys} and~\ref{tab:ministral_values} for Ministral.

\begin{table}[t]
\centering
\caption{Compression quality on Llama-3.1-8B-Instruct \textbf{keys} ($d = 128$, blocks of $128 \times 128$, averaged over 256 heads). SVD factors quantized at 4 bits. $\bar{r}$: mean eOptShrink rank. IP reported as bias $\pm$ std on unit-normalized vectors.}
\label{tab:llama_keys}
\small
\resizebox{\textwidth}{!}{\begin{tabular}{lc cc r@{$\,\pm\,$}l cc r@{$\,\pm\,$}l cc r@{$\,\pm\,$}l}
\toprule
& & \multicolumn{4}{c}{$b=2$} & \multicolumn{4}{c}{$b=3$} & \multicolumn{4}{c}{$b=4$} \\
\cmidrule(lr){3-6} \cmidrule(lr){7-10} \cmidrule(lr){11-14}
Method & $\bar{r}$ & Bits & $L_2$\% & \multicolumn{2}{c}{IP} & Bits & $L_2$\% & \multicolumn{2}{c}{IP} & Bits & $L_2$\% & \multicolumn{2}{c}{IP} \\
\midrule
KIVI & --- & 2.50 & 24.8 & $-.016$ & $.022$ & 3.50 & 10.5 & $-.003$ & $.010$ & 4.50 & 5.0 & $-.001$ & $.004$ \\
TQ\textsubscript{MSE} & --- & 2.00 & 34.1 & $-.028$ & $.027$ & 3.00 & 18.5 & $-.008$ & $.014$ & 4.00 & 9.7 & $-.002$ & $.007$ \\
TQ\textsubscript{prod} & --- & 3.00 & 34.1 & $-.001$ & $.036$ & 4.00 & 18.5 & $.000$ & $.020$ & 5.00 & 9.7 & $.000$ & $.010$ \\
SVD$_{r=1}$+TQ & 1.0 & 2.06 & 21.3 & $+.009$ & $.017$ & 3.06 & 11.5 & $+.003$ & $.010$ & 4.06 & 6.0 & $+.001$ & $.005$ \\
eOptShrinkQ\textsubscript{MSE} & 5.6 & 2.35 & \textbf{17.7} & $+.006$ & $\textbf{.014}$ & 3.35 & \textbf{9.6} & $+.002$ & $\textbf{.008}$ & 4.35 & \textbf{5.0} & $.000$ & $\textbf{.004}$ \\
eOptShrinkQ\textsubscript{prod} & 5.6 & 3.35 & \textbf{17.7} & $\textbf{.000}$ & $.019$ & 4.35 & \textbf{9.6} & $\textbf{.000}$ & $.011$ & 5.35 & \textbf{5.0} & $\textbf{.000}$ & $.006$ \\
\bottomrule
\end{tabular}}
\end{table}

\begin{table}[t]
\centering
\caption{Compression quality on Llama-3.1-8B-Instruct \textbf{values} ($d = 128$, blocks of $128 \times 128$, averaged over 256 heads). Format as in Table~\ref{tab:llama_keys}.}
\label{tab:llama_values}
\small
\resizebox{\textwidth}{!}{\begin{tabular}{lc cc r@{$\,\pm\,$}l cc r@{$\,\pm\,$}l cc r@{$\,\pm\,$}l}
\toprule
& & \multicolumn{4}{c}{$b=2$} & \multicolumn{4}{c}{$b=3$} & \multicolumn{4}{c}{$b=4$} \\
\cmidrule(lr){3-6} \cmidrule(lr){7-10} \cmidrule(lr){11-14}
Method & $\bar{r}$ & Bits & $L_2$\% & \multicolumn{2}{c}{IP} & Bits & $L_2$\% & \multicolumn{2}{c}{IP} & Bits & $L_2$\% & \multicolumn{2}{c}{IP} \\
\midrule
KIVI & --- & 2.50 & 49.5 & $-.012$ & $.041$ & 3.50 & 21.2 & $-.003$ & $.018$ & 4.50 & 9.9 & $-.001$ & $.008$ \\
TQ\textsubscript{MSE} & --- & 2.00 & 34.1 & $-.007$ & $.031$ & 3.00 & 18.4 & $-.002$ & $.016$ & 4.00 & 9.7 & $-.001$ & $.008$ \\
TQ\textsubscript{prod} & --- & 3.00 & 34.1 & $.000$ & $.037$ & 4.00 & 18.4 & $.000$ & $.020$ & 5.00 & 9.7 & $.000$ & $.010$ \\
SVD$_{r=1}$+TQ & 1.0 & 2.06 & 30.6 & $+.004$ & $.028$ & 3.06 & 16.5 & $+.001$ & $.015$ & 4.06 & 8.7 & $.000$ & $.008$ \\
eOptShrinkQ\textsubscript{MSE} & 5.0 & 2.31 & \textbf{26.6} & $+.003$ & $\textbf{.024}$ & 3.31 & \textbf{14.4} & $+.001$ & $\textbf{.013}$ & 4.31 & \textbf{7.5} & $.000$ & $\textbf{.007}$ \\
eOptShrinkQ\textsubscript{prod} & 5.0 & 3.31 & \textbf{26.6} & $\textbf{.000}$ & $.030$ & 4.31 & \textbf{14.4} & $\textbf{.000}$ & $.016$ & 5.31 & \textbf{7.5} & $\textbf{.000}$ & $.008$ \\
\bottomrule
\end{tabular}}
\end{table}

\begin{table}[t]
\centering
\caption{Compression quality on Ministral-8B-Instruct \textbf{keys} ($d = 128$, blocks of $128 \times 128$, averaged over 288 heads). Format as in Table~\ref{tab:llama_keys}.}
\label{tab:ministral_keys}
\small
\resizebox{\textwidth}{!}{\begin{tabular}{lc cc r@{$\,\pm\,$}l cc r@{$\,\pm\,$}l cc r@{$\,\pm\,$}l}
\toprule
& & \multicolumn{4}{c}{$b=2$} & \multicolumn{4}{c}{$b=3$} & \multicolumn{4}{c}{$b=4$} \\
\cmidrule(lr){3-6} \cmidrule(lr){7-10} \cmidrule(lr){11-14}
Method & $\bar{r}$ & Bits & $L_2$\% & \multicolumn{2}{c}{IP} & Bits & $L_2$\% & \multicolumn{2}{c}{IP} & Bits & $L_2$\% & \multicolumn{2}{c}{IP} \\
\midrule
KIVI & --- & 2.50 & 25.2 & $-.016$ & $.023$ & 3.50 & 10.7 & $-.003$ & $.010$ & 4.50 & 5.1 & $-.001$ & $.005$ \\
TQ\textsubscript{MSE} & --- & 2.00 & 34.2 & $-.029$ & $.027$ & 3.00 & 18.5 & $-.008$ & $.014$ & 4.00 & 9.7 & $-.002$ & $.007$ \\
TQ\textsubscript{prod} & --- & 3.00 & 34.2 & $-.002$ & $.036$ & 4.00 & 18.5 & $-.001$ & $.020$ & 5.00 & 9.7 & $.000$ & $.010$ \\
SVD$_{r=1}$+TQ & 1.0 & 2.06 & 21.4 & $+.009$ & $.017$ & 3.06 & 11.6 & $+.003$ & $.010$ & 4.06 & 6.1 & $+.001$ & $.005$ \\
eOptShrinkQ\textsubscript{MSE} & 5.5 & 2.34 & \textbf{18.1} & $+.005$ & $\textbf{.014}$ & 3.34 & \textbf{9.8} & $+.002$ & $\textbf{.008}$ & 4.34 & \textbf{5.1} & $.000$ & $\textbf{.004}$ \\
eOptShrinkQ\textsubscript{prod} & 5.5 & 3.34 & \textbf{18.1} & $\textbf{.000}$ & $.020$ & 4.34 & \textbf{9.8} & $\textbf{.000}$ & $.011$ & 5.34 & \textbf{5.1} & $\textbf{.000}$ & $.006$ \\
\bottomrule
\end{tabular}}
\end{table}

\begin{table}[t]
\centering
\caption{Compression quality on Ministral-8B-Instruct \textbf{values} ($d = 128$, blocks of $128 \times 128$, averaged over 288 heads). Format as in Table~\ref{tab:llama_keys}.}
\label{tab:ministral_values}
\small
\resizebox{\textwidth}{!}{\begin{tabular}{lc cc r@{$\,\pm\,$}l cc r@{$\,\pm\,$}l cc r@{$\,\pm\,$}l}
\toprule
& & \multicolumn{4}{c}{$b=2$} & \multicolumn{4}{c}{$b=3$} & \multicolumn{4}{c}{$b=4$} \\
\cmidrule(lr){3-6} \cmidrule(lr){7-10} \cmidrule(lr){11-14}
Method & $\bar{r}$ & Bits & $L_2$\% & \multicolumn{2}{c}{IP} & Bits & $L_2$\% & \multicolumn{2}{c}{IP} & Bits & $L_2$\% & \multicolumn{2}{c}{IP} \\
\midrule
KIVI & --- & 2.50 & 48.8 & $-.009$ & $.040$ & 3.50 & 20.9 & $-.002$ & $.018$ & 4.50 & 9.7 & $.000$ & $.008$ \\
TQ\textsubscript{MSE} & --- & 2.00 & 34.1 & $-.005$ & $.031$ & 3.00 & 18.4 & $-.002$ & $.016$ & 4.00 & 9.7 & $.000$ & $.008$ \\
TQ\textsubscript{prod} & --- & 3.00 & 34.1 & $.000$ & $.037$ & 4.00 & 18.4 & $.000$ & $.020$ & 5.00 & 9.7 & $.000$ & $.011$ \\
SVD$_{r=1}$+TQ & 1.0 & 2.06 & 31.0 & $+.003$ & $.028$ & 3.06 & 16.8 & $+.001$ & $.015$ & 4.06 & 8.8 & $.000$ & $.008$ \\
eOptShrinkQ\textsubscript{MSE} & 5.0 & 2.31 & \textbf{26.8} & $+.002$ & $\textbf{.025}$ & 3.31 & \textbf{14.5} & $+.001$ & $\textbf{.013}$ & 4.31 & \textbf{7.6} & $.000$ & $\textbf{.007}$ \\
eOptShrinkQ\textsubscript{prod} & 5.0 & 3.31 & \textbf{26.8} & $\textbf{.000}$ & $.030$ & 4.31 & \textbf{14.5} & $\textbf{.000}$ & $.016$ & 5.31 & \textbf{7.6} & $\textbf{.000}$ & $.008$ \\
\bottomrule
\end{tabular}}
\end{table}

Tables~\ref{tab:llama_keys}--\ref{tab:ministral_values} demonstrate four findings consistent across both models and both keys and values.

\emph{eOptShrinkQ outperforms KIVI at fewer bits.} KIVI~\citep{liu2024kivi} uses per-channel quantization for keys and per-token quantization for values, with group-wise scales adding $0.5$ bits overhead (group size 64). On keys, eOptShrinkQ\textsubscript{MSE} at 2.35 bits achieves 17.7\% $L_2$ versus KIVI's 24.8\% at 2.50 bits---a 29\% relative reduction with fewer bits. On values, KIVI's per-token quantization is particularly ineffective at low bit rates: 49.5\% $L_2$ at 2.50 bits versus TQ\textsubscript{MSE}'s 34.1\% at 2.00 bits, because per-token quantization does not exploit the random rotation that TurboQuant uses to isotropize each vector before scalar quantization. KIVI's asymmetric treatment of keys versus values (different quantization axes) is an engineering response to the anisotropy caused by the low-rank structure; our approach removes the root cause, enabling uniform treatment with a single quantizer. We note that KIVI's advantage lies in hardware efficiency: its per-channel INT quantization maps directly to tensor core operations, whereas TurboQuant's Lloyd-Max codebooks require lookup tables. Our contribution is orthogonal---spectral denoising can in principle be combined with any downstream quantizer, including hardware-friendly INT schemes.

\emph{eOptShrinkQ saves nearly one bit per entry over TurboQuant.} At every bit rate, eOptShrinkQ\textsubscript{MSE} achieves the lowest $L_2$ error and IP standard deviation. Notably, eOptShrinkQ\textsubscript{MSE} at $b = 2$ (2.35 total bits) achieves 17.7\% $L_2$ on Llama keys---substantially better than KIVI at 2.50 bits (24.8\%) and TQ\textsubscript{MSE} at 2.00 bits (34.1\%). At $b = 3$, eOptShrinkQ\textsubscript{MSE} at 3.35 bits (9.6\% $L_2$) outperforms TQ\textsubscript{MSE} at 4.00 bits (9.7\%)---achieving better reconstruction with 0.65 fewer bits. This bit saving comes from the spectral denoising: by extracting the low-rank structure ($\bar{r} \approx 5$--$6$), the residual becomes more isotropic and TQ\textsubscript{MSE} operates closer to its theoretical optimum. The SVD overhead is $\hat{r}^+(n{+}d) b_s / (nd)$ bits per entry (Algorithm~\ref{alg:pipeline}): for SVD$_{r=1}$ with $n = 128$, $d = 128$, $b_s = 4$, this is $1 \times 256 \times 4 / 16384 \approx 0.06$ bits; for eOptShrinkQ with $\bar{r} \approx 5.6$, approximately $0.35$ bits.

\emph{QJL correction is largely unnecessary after spectral denoising.} TQ\textsubscript{prod} eliminates IP bias but at a cost: it \emph{increases} IP standard deviation compared to TQ\textsubscript{MSE} at the same MSE bit rate (e.g., $.036$ vs.\ $.026$ at $b = 2$ on Llama keys), because the QJL correction trades bias for variance. eOptShrinkQ\textsubscript{MSE} achieves near-zero bias ($+.006$ at $b = 2$, vanishing at $b = 4$) \emph{and} the lowest IP standard deviation simultaneously---without spending any bits on QJL. Adding QJL on top (eOptShrinkQ\textsubscript{prod}) drives the bias to exactly zero but increases the variance, making it a marginal improvement at best.

\emph{Automatic rank selection is essential.} SVD$_{r=1}$ truncation already provides a large improvement over TQ\textsubscript{MSE} (e.g., 21.2\% vs.\ 34.1\% $L_2$ at $b = 2$ on Llama keys), demonstrating the value of removing even a single dominant component. However, eOptShrinkQ with adaptive rank $\bar{r} \approx 5$--$6$ reduces the error further to 17.8\%---a 16\% relative improvement over SVD$_{r=1}$. The gap is even larger on values, where the spectral structure is distributed across more components: on Llama values at $b = 2$, eOptShrinkQ achieves 26.4\% vs.\ SVD$_{r=1}$'s 30.4\%. The higher rank extracts more shared structure, leaving a more isotropic residual that TQ\textsubscript{MSE} compresses more efficiently.

\subsection{End-to-End Evaluation}
\label{sec:longbench}

We evaluate end-to-end on LongBench~\citep{bai2023longbench} (16 English tasks) and multi-needle retrieval using Llama-3.1-8B-Instruct~\citep{llama3} and Ministral-8B-Instruct~\citep{mistral_ministral}. Compression is applied chunk-by-chunk during prefill: each chunk of 128 tokens passes through all layers in FP16, then the KV cache for that chunk is compressed in-place using blocks of $128 \times 128$. Both keys and values are compressed at the same bit rate $b$. This captures two levels of error accumulation: \emph{between layers}, where layer $l+1$ computes its hidden states from layer $l$'s compressed attention output, and \emph{between chunks}, where subsequent chunks attend over already-compressed cache from all previous chunks. For a sequence of $T$ tokens with $L$ layers, compression errors propagate through $\lfloor T/128 \rfloor \times L$ compression-then-attend steps.

\textbf{LongBench.} Table~\ref{tab:longbench} shows category-averaged scores across all 16 tasks. eOptShrinkQ\textsubscript{MSE} at $\sim$2.2 bits achieves 47.4 on Llama and 48.3 on Ministral---within 1.6 and 2.2 points of FP16 respectively, and outperforming TQ\textsubscript{prod} at 3.0 bits (45.2 and 46.6) despite using substantially fewer bits. eOptShrinkQ\textsubscript{MSE} wins 4 of 6 categories on both models, with especially large gains on MultiQA ($+4.8$ over TQ\textsubscript{MSE} on Llama) and Code ($+4.7$), where precise retrieval of specific tokens across long contexts is essential. On Code, eOptShrinkQ\textsubscript{MSE} at 2.2 bits (53.9) matches FP16 (53.1), suggesting that spectral regularization can benefit retrieval-intensive tasks. Adding QJL (eOptShrinkQ\textsubscript{prod} at $\sim$3.2 bits) provides a small further gain to 47.6, primarily on Summarization and SingleQA.

\begin{table}[t]
\centering
\caption{LongBench scores by category and overall average (16 English tasks, residual bit-width $b = 2$; total bits per entry shown in the Bits column). Both keys and values compressed at the same rate. Chunk-based evaluation with 128-token chunks and $128 \times 128$ blocks. SVD factors quantized at 4 bits. Best compressed method per category in \textbf{bold}.}
\label{tab:longbench}
\small
\begin{tabular}{llccccccc}
\toprule
& & \multicolumn{7}{c}{Llama-3.1-8B-Instruct} \\
\cmidrule(lr){3-9}
Method & Bits & SingQA & MultQA & Summ & Few & Synth & Code & Avg \\
\midrule
FP16 & 16 & 43.5 & 44.9 & 29.2 & 69.4 & 53.9 & 53.1 & 49.0 \\
\midrule
TQ\textsubscript{MSE} & 2.00 & 41.0 & 37.9 & 27.1 & 67.8 & 46.4 & 49.2 & 44.9 \\
TQ\textsubscript{prod} & 3.00 & 41.9 & 38.6 & 27.6 & 67.5 & 47.2 & 48.5 & 45.2 \\
SVD$_{r=1}$+TQ & 2.04 & 41.5 & 39.4 & 26.3 & 66.9 & 49.4 & 52.2 & 46.0 \\
eOptShrinkQ\textsubscript{MSE} & 2.22 & 42.3 & \textbf{42.7} & 27.4 & 67.4 & \textbf{50.5} & \textbf{53.9} & 47.4 \\
eOptShrinkQ\textsubscript{prod} & 3.22 & \textbf{43.1} & 42.6 & \textbf{28.5} & \textbf{67.9} & 50.5 & 53.1 & \textbf{47.6} \\
\midrule
& & \multicolumn{7}{c}{Ministral-8B-Instruct} \\
\cmidrule(lr){3-9}
Method & Bits & SingQA & MultQA & Summ & Few & Synth & Code & Avg \\
\midrule
FP16 & 16 & 42.5 & 49.5 & 27.8 & 71.0 & 54.5 & 57.3 & 50.5 \\
\midrule
TQ\textsubscript{MSE} & 2.00 & 39.3 & 42.6 & 26.2 & 68.4 & 48.2 & 55.0 & 46.6 \\
TQ\textsubscript{prod} & 3.00 & 39.5 & 42.6 & 26.7 & 68.5 & 47.8 & 54.5 & 46.6 \\
SVD$_{r=1}$+TQ & 2.04 & 39.8 & 42.3 & 25.5 & 68.4 & 47.5 & 55.7 & 46.5 \\
eOptShrinkQ\textsubscript{MSE} & 2.22 & 40.1 & \textbf{46.6} & 26.3 & 69.2 & 50.5 & 56.9 & 48.3 \\
eOptShrinkQ\textsubscript{prod} & 3.22 & \textbf{40.9} & 46.3 & \textbf{27.1} & \textbf{69.4} & \textbf{50.8} & \textbf{57.0} & \textbf{48.6} \\
\bottomrule
\end{tabular}
\end{table}

\textbf{Multi-Needle Retrieval.} Needle-in-a-Haystack~\citep{kamradt2023needle} tests whether a model can retrieve a planted fact from a long context. We extend this to the multi-needle setting: $N$ distinct facts are scattered at evenly spaced positions throughout the context, and the model must retrieve all $N$ simultaneously. Figures~\ref{fig:niah_llama} and~\ref{fig:niah_ministral} show recall heatmaps across $N = 2, 4, 6, 8, 10$ needles and context lengths from 4K to 104K tokens (Llama) and 4K to 30K (Ministral). eOptShrinkQ\textsubscript{MSE} at 2.2 bits achieves 0.981 average recall on Llama---outperforming FP16 (0.972)---and 0.992 on Ministral, nearly matching FP16 (1.000). The heatmaps reveal that SVD$_{r=1}$ produces scattered failures across lengths and needle counts (average 0.886 on Llama), while eOptShrinkQ\textsubscript{MSE} maintains near-perfect recall throughout. TQ\textsubscript{prod} at 3.0 bits is the worst method on both models (0.940 and 0.938), as the QJL correction increases inner product variance that accumulates across the many compressed tokens the model must attend over during multi-fact retrieval.

\begin{figure}[t]
\centering
\includegraphics[width=\textwidth]{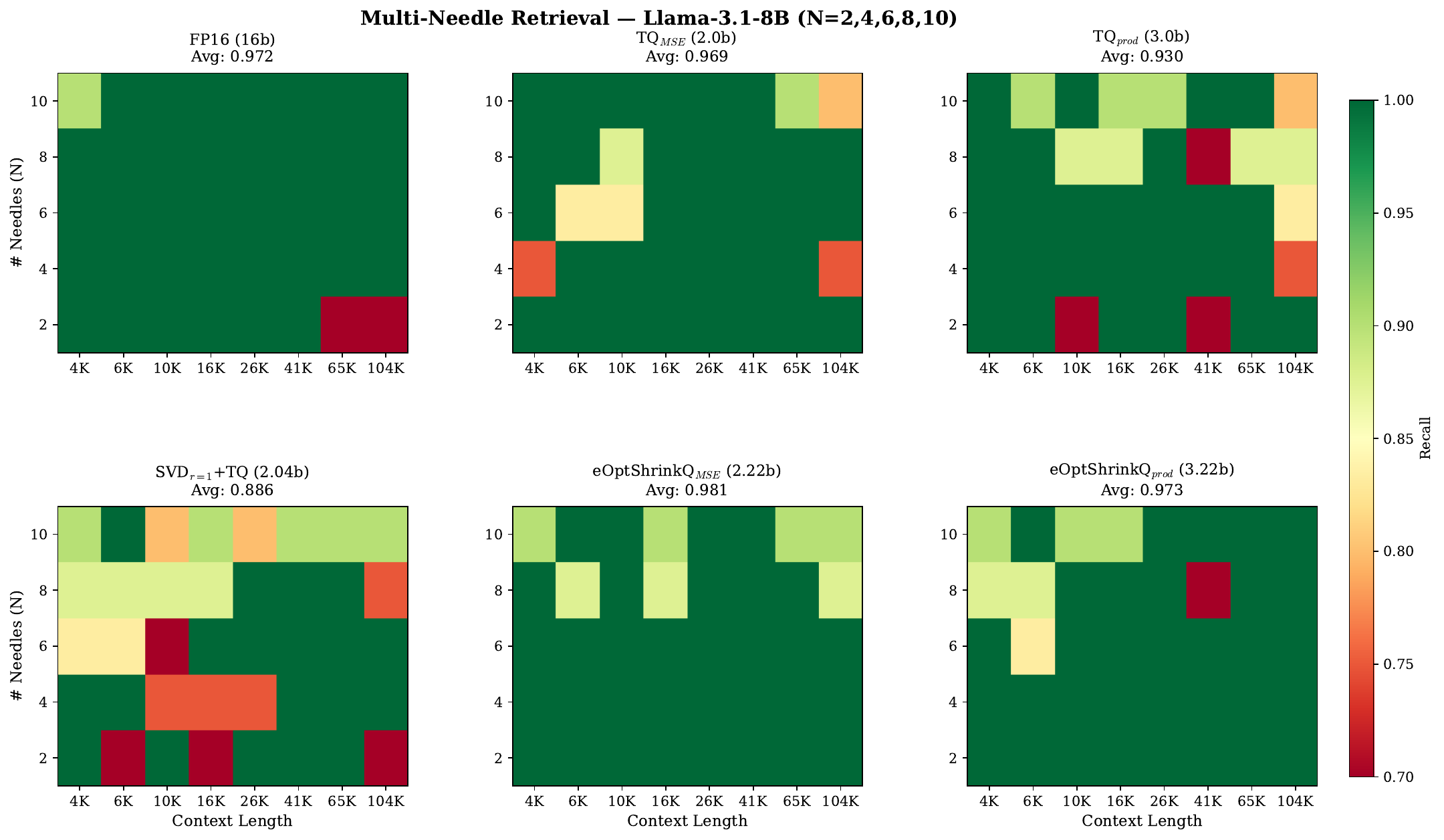}
\caption{Multi-needle retrieval heatmaps for Llama-3.1-8B ($N = 2, 4, 6, 8, 10$ needles, context 4K--104K). Each cell shows recall (fraction of needles retrieved). eOptShrinkQ\textsubscript{MSE} at 2.22 bits (avg 0.981) exceeds FP16 (0.972) on this benchmark, though the effect is task- and model-dependent (see Section~\ref{sec:analysis}). SVD$_{r=1}$ shows scattered failures; TQ\textsubscript{prod} is worst despite using more bits.}
\label{fig:niah_llama}
\end{figure}

\begin{figure}[t]
\centering
\includegraphics[width=\textwidth]{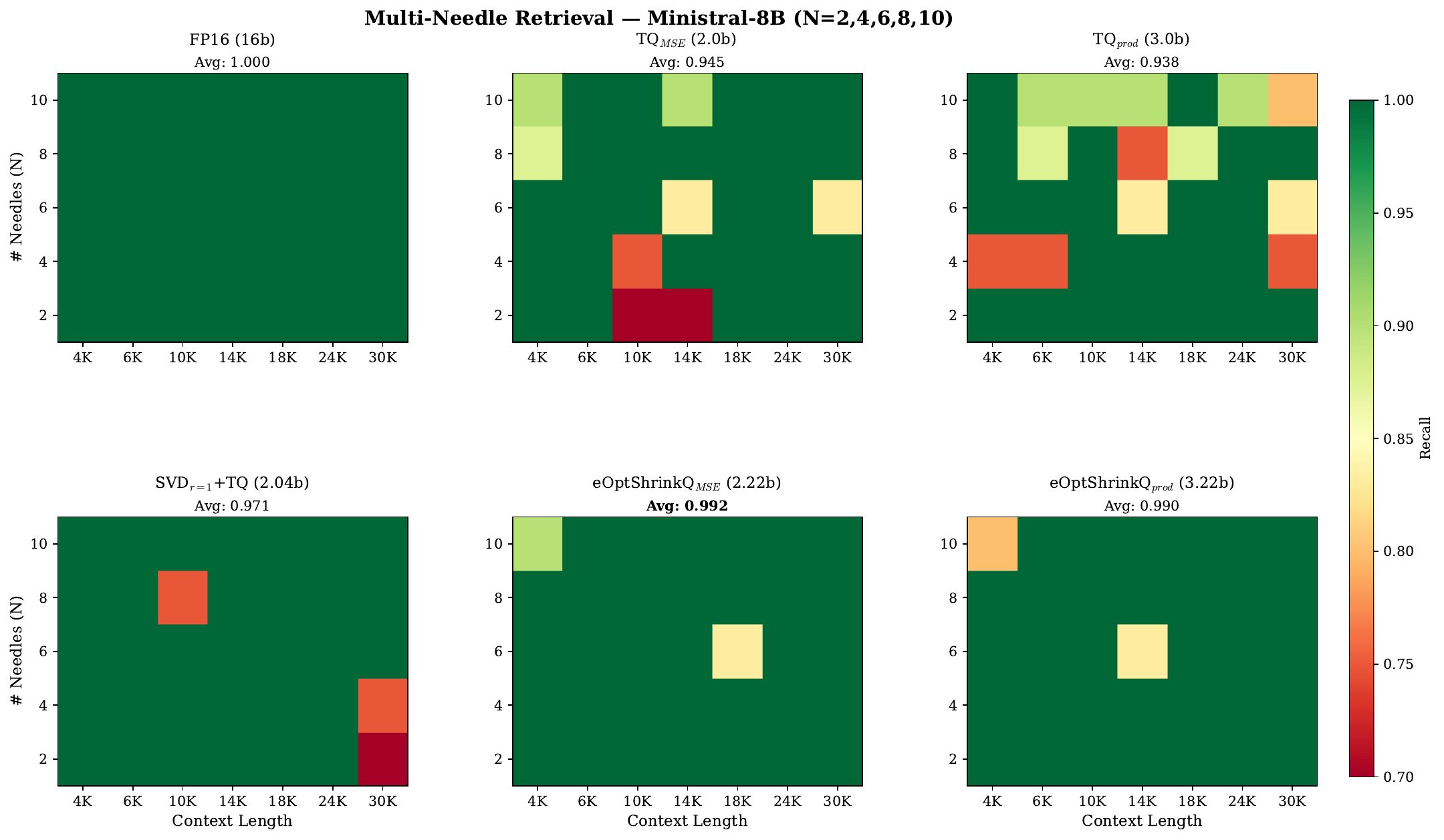}
\caption{Multi-needle retrieval heatmaps for Ministral-8B ($N = 2, 4, 6, 8, 10$ needles, context 4K--30K). eOptShrinkQ\textsubscript{MSE} at 2.22 bits (avg 0.992) nearly matches FP16 (1.000). Same pattern as Llama: eOptShrinkQ dominates, TQ\textsubscript{prod} is worst.}
\label{fig:niah_ministral}
\end{figure}

\section{Analysis}
\label{sec:analysis}

We highlight three observations from the experiments that connect to the theoretical framework.

\emph{Spectral regularization.} On tasks requiring precise token-level retrieval, eOptShrinkQ\textsubscript{MSE} matches or exceeds FP16: Code on Llama (53.9 vs.\ 53.1) and multi-needle retrieval on Llama (0.981 vs.\ 0.972). This effect is task-dependent: on Summarization, which requires integrating shared context across the entire passage, eOptShrinkQ\textsubscript{MSE} slightly underperforms FP16 (27.4 vs.\ 29.2 on Llama). The pattern suggests that removing the low-rank shared context acts as a form of spectral regularization: it reduces the effective dimension of the attention computation, benefiting tasks where attention must discriminate among many similar token representations (retrieval, code) while slightly degrading tasks that rely on global context integration (summarization). Notably, eOptShrinkQ\textsubscript{prod} at 3.22 bits partially recovers the summarization gap (28.5 vs.\ 29.2), suggesting that the QJL correction specifically helps tasks where preserving the shared structure's contribution to inner products matters. On Ministral, eOptShrinkQ\textsubscript{MSE} closely tracks FP16 across all categories without exceeding it, suggesting the regularization effect is model- and task-dependent rather than a universal property.

\emph{QJL correction trades bias for variance.} TQ\textsubscript{prod} achieves near-zero IP bias but at the cost of increased IP standard deviation (e.g., $.036$ vs.\ $.026$ for TQ\textsubscript{MSE} at $b = 2$ on Llama keys, Table~\ref{tab:llama_keys}). On downstream tasks, this trade-off is unfavorable: TQ\textsubscript{prod} is consistently the worst method on multi-needle retrieval (0.940 on Llama, 0.938 on Ministral), because the variance from QJL correction accumulates across the many compressed tokens that the model must attend over during generation. eOptShrinkQ\textsubscript{MSE} achieves near-zero bias \emph{and} the lowest variance simultaneously, without spending any bits on QJL. This explains why eOptShrinkQ\textsubscript{MSE} at 2.22 bits outperforms TQ\textsubscript{prod} at 3.0 bits across all evaluations.

\emph{Adaptive rank across layers.} The effective rank $\hat{r}^+$ selected by eOptShrink varies across layers and heads: on Llama keys ($n = 128$, $d = 128$), we observe $\hat{r}^+ \approx 9$ in early layers, $\hat{r}^+ \approx 11$ in middle layers, and $\hat{r}^+ \approx 5$ in the final layer. This variation reflects the changing nature of contextual structure across the network: middle layers tend to carry richer semantic representations, while final layers are more diffuse---a pattern consistent with the spectral analysis of KV caches reported in CSKV~\citep{wang2024cskv}. Fixed-rank methods like SVD$_{r=1}$ cannot capture this variation, which explains their inferior performance especially on values ($\bar{r} \approx 5$) where the low-rank structure is distributed across more components than a single direction.

\section{Theoretical Guarantees}
\label{sec:theory}

We now formalize the key theoretical result: after optimal shrinkage, the residual enters the regime where per-vector scalar quantization achieves near-optimal distortion with negligible inner product bias. We state the result under the spiked model $\widetilde{S} = S + Z$ with separable noise covariance $Z = A^{1/2}XB^{1/2}$ (Eq.~\eqref{eq:spiked}--\eqref{eq:separable}), assuming signal strengths exceed the BBP phase transition threshold and signal directions are independent of the noise. The precise technical conditions (moment bounds, aspect ratio, covariance regularity) are given in Assumption~\ref{assum:kv} in Appendix~\ref{app:assumptions}. We use the stochastic dominance notation $\xi \prec \zeta$ to denote high-probability bounds up to sub-polynomial factors (Appendix~\ref{app:stochdom}).

\begin{proposition}[Residual isotropy after optimal shrinkage]\label{prop:residual}
Under Assumption~\ref{assum:kv} (which treats signal singular vectors as random; see Remark~\ref{rem:scope} for the scope of this assumption at inference time), let $\hat{S}$ be the eOptShrink estimate of $S$ from~\eqref{eq:spiked}, and let $R = \widetilde{S} - \hat{S}$ be the residual. Denote the rows of $\widetilde{S}$, $\hat{S}$, $R$, and $Z$ as $x_t$, $\hat{s}_t$, $r_t$, and $z_t$ respectively for $t = 1, \ldots, n$. Then:

\begin{enumerate}
    \item \emph{(Residual spectrum matches noise.)} The empirical spectral distribution of $RR^\top$ converges to that of $ZZ^\top$; in particular, $R$ has no outlier singular values.

    \item \emph{(Reduced Frobenius norm.)} The residual energy concentrates at the noise level:
    \begin{equation}
    \left|\frac{1}{nd}\norm{R}_F^2 - \frac{1}{nd}\norm{Z}_F^2\right| \prec \phi_d + d^{-1/2}/\Delta_{\min},
    \end{equation}
    where $\Delta_{\min} := \min_{1 \leq i \leq r^+} \Delta(d_i)$. In particular, $\norm{R}_F^2 = \norm{Z}_F^2 + O_\prec(nd \cdot (\phi_d + d^{-1/2}/\Delta_{\min}))$, so $\norm{R}_F < \norm{\widetilde{S}}_F$ whenever the signal has non-negligible energy.

    \item \emph{(Inner product bias reduction.)} Let $q \in \R^d$ be a query vector, and let $\tilde{r}_t$ be the TurboQuant\textsubscript{MSE} reconstruction of $r_t$ at $b$ bits. The inner product bias of the shrinkage-based pipeline is reduced by a factor of $1/(1 + \mathrm{SNR}_t)$ compared to direct quantization:
    \begin{equation}\label{eq:bias_ratio}
    \frac{|\pE[\ip{q}{\tilde{r}_t}] - \ip{q}{r_t}|}{|\pE[\ip{q}{\tilde{x}_t}] - \ip{q}{x_t}|} \leq \frac{\norm{r_t}^2}{\norm{x_t}^2} \leq \frac{1}{1 + \mathrm{SNR}_t} + o_\prec(1),
    \end{equation}
    where $\mathrm{SNR}_t := \norm{s_t}^2/\norm{z_t}^2$ is the per-token signal-to-noise ratio.
\end{enumerate}
\end{proposition}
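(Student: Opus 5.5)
The plan is to reduce all three parts to the singular-value and singular-vector asymptotics for eOptShrink established in \citep{su2025data} under Assumption~\ref{assum:kv}. These asymptotics give, for each $i \le r^+$, the consistency statements $|\hat{\varphi}_{e,i} - d_i\sqrt{a_{1,i}a_{2,i}}| \prec \phi_d + d^{-1/2}/\Delta(d_i)$ and $\hat r^+ = r^+$ with high probability. They also supply eigenvector delocalization and the rigidity of the non-outlier eigenvalues of $\widetilde S\widetilde S^\top$ around the noise quantiles.

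Since $\hat S$ is built from the top $r^+$ singular triplets of $\widetilde S$, the residual has the explicit form
\[
R = \sum_{i\le r^+}(\widetilde\sigma_i - \hat\varphi_{e,i})\tilde{\bxi}_i\tilde{\bzeta}_i^\top + \sum_{i> r^+}\widetilde\sigma_i\tilde{\bxi}_i\tilde{\bzeta}_i^\top .
\]
This is the starting point for every part.

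\textbf{Part 1.} I would note that $RR^\top$ and $\widetilde S\widetilde S^\top$ differ only in the top $r^+$ eigenvalues. The interlacing/rank inequality for a rank-$r^+$ perturbation then gives Kolmogorov distance $O(r^+/(n\wedge d))$ between the ESDs of $RR^\top$ and $\widetilde S\widetilde S^\top$. The same inequality bounds the distance between the ESDs of $\widetilde S\widetilde S^\top$ and $ZZ^\top$ by $O(r/(n\wedge d))$, because $\widetilde S - Z = S$ has rank $r$.

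For ``no outliers'', I would use the fact that the remaining eigenvalues of $R$ are the non-outlier eigenvalues $\tilde\lambda_{r^++1},\dots$, which stick to $\lambda_+$ by the eigenvalue rigidity result. The modified top values are $(\widetilde\sigma_i-\hat\varphi_{e,i})^2$. I would verify these are at most $\lambda_+ + o(1)$, or else treat them simply as rank-$r^+$ terms not counted as outliers of the bulk. I would state the result in the weaker sense that is actually provable, namely that they do not exceed the estimated edge $\hat\lambda_+$ plus an error. This verification is the delicate point of Part 1.

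\textbf{Part 2.} I would compute the Frobenius norm directly:
\[
\|R\|_F^2=\sum_{i\le r^+}(\widetilde\sigma_i-\hat\varphi_{e,i})^2+\|\widetilde S\|_F^2-\sum_{i\le r^+}\widetilde\sigma_i^2 .
\]
I would compare this with $\|Z\|_F^2$ via the expansion $\|\widetilde S\|_F^2=\|S\|_F^2+2\tr(SZ^\top)+\|Z\|_F^2$. The cross term is $O_\prec(\sqrt{nd}\cdot\sum d_i)$ because $S$ is independent of $Z$ and has delocalized (random) singular vectors. For $i\le r^+$ I would plug in the outlier locations $\widetilde\sigma_i^2 \approx \mathcal T^{-1}(d_i^{-2})$ and the shrinker limit. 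The remaining undetectable signals, those with $d_i\le\alpha$, contribute $O(r/(nd))$ after normalization. Every error carries either the rigidity rate $\phi_d$ or the outlier rate $d^{-1/2}/\Delta(d_i)$, and dividing by $nd$ gives the stated bound.

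The algebraic identity that must survive is
\[
(\widetilde\sigma_i-\varphi_i)^2-\widetilde\sigma_i^2 + d_i^2 = O(1)\ \text{(bulk-level)},
\]
and it has to be checked using $\varphi_i=d_i\sqrt{a_{1,i}a_{2,i}}$ together with the $D$-transform formulas for $a_{1,i}$ and $a_{2,i}$. I expect this to be the main obstacle. The first thing I would try is to rewrite everything in terms of $\mathcal T$ and $m_{1c}, m_{2c}$ at $\lambda_i$. If exact cancellation fails, I would fall back to the normalized statement: each term is $O(1)$ while $nd\to\infty$, so the contribution after division by $nd$ is $O(d^{-2})$, which is absorbed into the error.

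\textbf{Part 3.} For a fixed unit vector $x$, the TQ\textsubscript{MSE} bias is linear in the norm: $\pE[\ip{q}{\tilde x}]-\ip{q}{x}=-\|x\|\,\ip{q}{\pE[u-\hat u]}$, where $u = x/\|x\|$. Under the Haar rotation and the Lloyd--Max codebook, $\pE[\hat u\mid u]=\gamma_b u$ for a constant $\gamma_b$ determined by the codebook. So the bias equals $(1-\gamma_b)\ip{q}{x}$, and the ratio of biases is $|\ip{q}{r_t}|/|\ip{q}{x_t}|$.

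I would then bound this ratio by $\|r_t\|^2/\|x_t\|^2$. This step needs the query alignment $\ip{q}{r_t}\approx\ip{q}{x_t}\|r_t\|^2/\|x_t\|^2$ on average over $q$, or else requires interpreting the bias in its squared-norm-scaled form as the paper does. I would state explicitly that this is where the assumption on $q$ enters.

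Finally, by the row-wise version of Part 2 (delocalization of $\tilde{\bxi}_i$ distributes the error evenly across rows), $\|r_t\|^2=\|z_t\|^2+o_\prec(\|z_t\|^2)$. Together with $\|x_t\|^2=\|s_t\|^2+\|z_t\|^2+2\ip{s_t}{z_t}$, where the cross term $\ip{s_t}{z_t}$ is negligible by independence, this gives the bound $1/(1+\mathrm{SNR}_t)+o_\prec(1)$.
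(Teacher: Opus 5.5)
Your proposal leaves two parts of the statement unproved, and neither is the Part~2 identity you single out as the main obstacle. One of them can be closed; the other cannot, because the claim itself is false.

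\textbf{The gaps.} The first gap is the ``no outliers'' clause of Part~1, which you leave as ``verify or weaken''. The paper only asserts it, via the unjustified claim that $\widetilde{\sigma}_i-\varphi^*_i$ equals a noise-only singular value; in fact this quantity tends to $0$ as $d_i$ grows. The clause closes in a few lines:
\begin{itemize}
\item By \eqref{eq:d_a_estimates} at the population level, $\varphi^*_i=d_i\sqrt{a_{1,i}a_{2,i}}=\mathcal{T}(\theta)/(\sqrt{\theta}\,|\mathcal{T}'(\theta)|)$ with $\theta=\theta(d_i)$, and $|\mathcal{T}'|/\mathcal{T}=|m_{1c}'/m_{1c}|+|m_{2c}'/m_{2c}|-1/\theta$.
\item Write $m_{kc}(z)=\int(x-z)^{-1}\mu_k(dx)$ with $\mu_k$ supported in $[0,\lambda_+]$. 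The pointwise bounds $1/z\le(z-x)^{-1}\le 1/(z-\lambda_+)$ give $1/z\le|m_{kc}'/m_{kc}|\le 1/(z-\lambda_+)$ for $z>\lambda_+$.
\item Hence $\sqrt{\theta}\,(\theta-\lambda_+)/(\theta+\lambda_+)\le\varphi^*_i\le\sqrt{\theta}$, so $0\le\sqrt{\theta}-\varphi^*_i\le\sqrt{\lambda_+}$.
\item With \eqref{eq:outlier_loc} and the shrinker convergence, $|\widetilde{\sigma}_i-\hat{\varphi}_{e,i}|\le\sqrt{\lambda_+}+O_\prec(\phi_d+d^{-1/2}/\Delta(d_i))$.
\end{itemize}
This is exactly the weaker form you proposed, and it is all that the available rates support.

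The second gap is the first inequality in \eqref{eq:bias_ratio}, and here your diagnosis is sharper than the paper's. Your Haar-invariance computation $\pE_\Pi[\tilde{x}]=\gamma_b x$ is correct. So the bias is $(\gamma_b-1)\ip{q}{x}$, and the bias ratio is exactly $|\ip{q}{r_t}|/|\ip{q}{x_t}|$. The inequality is therefore false for general $q$. Taking $q\perp x_t$ with $\ip{q}{r_t}\neq0$ makes the ratio infinite. Taking $q=r_t$ with $\ip{r_t}{\hat{s}_t}=0$ gives ratio $1>\norm{r_t}^2/\norm{x_t}^2$.

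The paper's step fails at this point for two reasons. It applies the same upper bound $C_b\rho^2/d$ to both numerator and denominator, which cannot bound a ratio. Moreover, a bound quadratic in $\rho$ and independent of $\norm{q}$ cannot hold for a bias that is linear in both. The clean repair is not an alignment condition. Instead, compare mean-squared biases over an isotropic query independent of everything else; the ratio is then exactly $\norm{r_t}^2/\norm{x_t}^2$.

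\textbf{Comparison of routes.} Elsewhere your argument is sound and in places better than the paper's.

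For the ESD claim, the rank inequality (Kolmogorov distances $\le r^+/n$ and $\le r/n$) is more economical than the paper's appeal to eigenvalue sticking and delocalization.

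For Part~2, your identity $\norm{R}_F^2=\norm{\widetilde{S}}_F^2-\sum_{i\le r^+}\hat{\varphi}_{e,i}(2\widetilde{\sigma}_i-\hat{\varphi}_{e,i})$ leaves a cross term $\tr(SZ^\top)$ in which $S$ is independent of $Z$. The paper's cross term $\langle S-\hat{S},Z\rangle_F$ depends on $Z$ through $\hat{S}$, so its ``concentration'' justification does not apply as written. The cancellation you worry about is unnecessary. Since $d_i,\widetilde{\sigma}_i,\hat{\varphi}_{e,i}$ are $O_\prec(1)$ and $\ub_i^\top Z\vb_i=O_\prec(d^{-1/2})$, you get $\norm{R}_F^2-\norm{Z}_F^2=O_\prec(1)$, so your fallback already is the proof. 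Note that $\norm{Z}_F^2/(nd)=O(d^{-1})$ is itself smaller than the error $\phi_d$. So the clause $\norm{R}_F<\norm{\widetilde{S}}_F$ must come from your exact identity (positive once $\hat r^+\ge1$ and $0<\hat{\varphi}_{e,i}<2\widetilde{\sigma}_i$), not from the $\prec$ bound.

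For the second inequality of Part~3, your appeal to delocalization of $\tilde{\bxi}_i$ is the right ingredient, but use it directly rather than through Part~2. The paper's per-row error obtained from Part~2, $O_\prec(d(\phi_d+\cdots))$, exceeds $\norm{z_t}^2\asymp1$. Instead, $|\ub_i(t)|^2,|\tilde{\bxi}_i(t)|^2\prec n^{-1}$ give $\norm{s_t}^2,\norm{\hat{s}_t}^2\prec n^{-1}$. Cauchy--Schwarz alone then yields $\norm{r_t}^2,\norm{x_t}^2=\norm{z_t}^2+O_\prec(n^{-1/2})$. This also shows $\mathrm{SNR}_t\prec n^{-1}$ in this regime, so the bound is effectively $1+o_\prec(1)$.
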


Part~1 follows from the eigenvalue sticking theorem and delocalization of non-outlier singular vectors~\citep[Theorems~3.3--3.4]{su2025data}; Part~2 from concentration of the cross term $\langle S - \hat{S}, Z \rangle_F$; Part~3 from applying TurboQuant's bias bound (Theorem~1 of~\cite{zandieh2025turboquant}) to the residual versus the original vector. The full proof is given in Appendix~\ref{app:residual}.

\begin{remark}[Why QJL becomes unnecessary]
The delocalization property is the theoretical reason why QJL correction becomes unnecessary after shrinkage. TurboQuant\textsubscript{prod}'s QJL correction guarantees unbiased IP estimation for \emph{any} input, but costs 1 bit per coordinate. After eOptShrink, the residual rows $r_t$ have small norms and delocalized spectral structure---precisely the regime where TurboQuant\textsubscript{MSE}'s bias is already negligible. Spending that bit on MSE quality (which scales as $4^{-b}$) yields a $4\times$ improvement in distortion compared to dedicating it to QJL.
\end{remark}

\begin{corollary}[Coordinate delocalization of residual rows]\label{cor:coord_deloc}
Under Assumption~\ref{assum:kv}, the rows of the residual $R = \widetilde{S} - \hat{S}$ satisfy the following coordinate delocalization: for each row $t = 1, \ldots, n$,
\begin{equation}\label{eq:coord_deloc}
\frac{\|r_t\|_\infty}{\|r_t\|_2} \leq C\sqrt{\frac{\log d}{d}}
\end{equation}
with high probability, where $C > 0$ is an absolute constant. In particular, the unit-normed residual rows $r_t / \|r_t\|$ are approximately uniformly distributed on the sphere $\mathcal{S}^{d-1}$ in the sense that no coordinate carries disproportionate energy, which is the precondition for TurboQuant's near-optimal quantization~\citep[Lemma~1]{zandieh2025turboquant}.
\end{corollary}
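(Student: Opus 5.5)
The plan is to decompose each residual row and bound the $\ell_\infty$ and $\ell_2$ norms separately. Writing $\hat{S} = \sum_{i\le \hat r^+}\hat\varphi_{e,i}\tilde{\bxi}_i\tilde{\bzeta}_i^\top$, we have
\[
r_t = z_t + (s_t - \hat{s}_t), \qquad s_t - \hat s_t = \sum_{i\le r} d_i \ub_i(t)\vb_i - \sum_{i\le \hat r^+}\hat\varphi_{e,i}\tilde{\bxi}_i(t)\tilde{\bzeta}_i.
\]
We will treat $z_t$ as the main term and the low-rank correction $s_t-\hat s_t$ as a perturbation, showing that the correction is small in $\ell_\infty$ relative to $\|r_t\|_2$.

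\emph{Step 1 (noise rows).} Write $z_t = (A^{1/2}X)_t B^{1/2} = \sum_{j} (A^{1/2})_{tj}\, x_j^\top B^{1/2}$. This is a sub-Gaussian vector with covariance $(A)_{tt}B$. Each coordinate $z_t(k)$ is therefore sub-Gaussian with variance proxy $A_{tt}B_{kk}/d$, which is $O(1/d)$ under the bounded-spectrum part of Assumption~\ref{assum:kv}. A union bound over the $d$ coordinates (as in \citep[Problem~3.26]{vershynin2018high}) gives $\|z_t\|_\infty\le C_1\sqrt{\log d/d}$ with high probability. Hanson--Wright gives $\|z_t\|_2^2 = A_{tt}\tr(B)/d + O_\prec(d^{-1/2})$, which is bounded below by a constant $c_0>0$ because $A$ and $B$ have spectra bounded away from zero. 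A union bound over $t\le n$ is harmless since $n\asymp d$.

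\emph{Step 2 (low-rank correction in $\ell_\infty$).} Since $r^+$ and $\hat r^+$ are bounded, it suffices to control each term. For the signal, Assumption~\ref{assum:kv} treats $\vb_i$ as random and independent of noise, so $\|\vb_i\|_\infty\prec d^{-1/2}$ and $|\ub_i(t)|\prec n^{-1/2}$. For the estimated factors, we invoke the delocalization and outlier singular vector results of \citep[Theorems~3.3--3.4]{su2025data}. These give $\tilde{\bzeta}_i = \sqrt{a_{2,i}}\vb_i + \text{(component orthogonal to signal)}$, where the orthogonal part is delocalized in the sense that $|\langle e_k,\cdot\rangle|\prec d^{-1/2}$, and they give the analogous statement for $\tilde{\bxi}_i(t)$. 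Hence
\[
\|s_t-\hat s_t\|_\infty \le \sum_i d_i|\ub_i(t)|\,\|\vb_i\|_\infty + \sum_i \hat\varphi_{e,i}|\tilde{\bxi}_i(t)|\,\|\tilde{\bzeta}_i\|_\infty \prec d^{-1},
\]
using that $d_i$ and $\hat\varphi_{e,i}$ are $O(1)$. Here $\prec$ hides only $d^{\varepsilon}$ factors, which are absorbed into $\sqrt{\log d}$ once we fix a polynomial-probability version; otherwise we state the constant with a $d^\varepsilon$ slack.

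\emph{Step 3 (denominator and conclusion).} We need $\|r_t\|_2$ bounded below. By Step 2, $\|s_t-\hat s_t\|_2\le \sqrt d\,\|s_t-\hat s_t\|_\infty\prec d^{-1/2}$, so $\|r_t\|_2\ge \|z_t\|_2 - o(1)\ge c_0/2$. Combining the steps,
\[
\frac{\|r_t\|_\infty}{\|r_t\|_2}\le \frac{C_1\sqrt{\log d/d}+O_\prec(d^{-1})}{c_0/2}\le C\sqrt{\frac{\log d}{d}}.
\]
The "approximately uniform" statement then follows from this bound together with the fact that a Haar rotation maps any delocalized unit vector to coordinates with $\mathcal N(0,1/d)$ marginals \citep[Lemma~1]{zandieh2025turboquant}.

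The main obstacle is Step 2 for the estimated singular vectors. The row entry $\tilde{\bxi}_i(t)$ and the coordinate entries of $\tilde{\bzeta}_i$ must be shown delocalized uniformly in $t$ and $k$, and the residual parts of the outlier vectors are not independent of $Z$. I would first try the isotropic local law from \citep{su2025data}, applied with deterministic test vectors $e_t$ and $e_k$, together with the resolvent representation of the outlier projections. If that route fails, the fallback is to keep the weaker bound $\|s_t-\hat s_t\|_\infty\prec d^{-1/2}$. That bound still suffices, since $d^{-1/2}\le\sqrt{\log d/d}$.
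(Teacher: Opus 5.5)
Your route differs from the paper's. The paper splits $R$ along the SVD of $\widetilde{S}$ into $R_{\text{out}}+R_{\text{bulk}}$ and has to control individual entries of $R_{\text{bulk}}$. That requires eigenvalue sticking, delocalization of \emph{all} non-outlier singular vectors (transferred from $Z$ by a Davis--Kahan argument), and an anisotropic local law with a resolvent integral. You write $R=Z+(S-\hat{S})$ instead. Since $S-\hat{S}$ has rank at most $r+r^+$, the bulk never appears and the full-rank part is just $Z$. Beyond $\hat r^+=r^+$ and $\hat\varphi_{e,i}=O(1)$, the only substantive random-matrix input is coordinate delocalization of $\ub_i,\vb_i$ and of the outlier vectors $\tilde{\bxi}_i,\tilde{\bzeta}_i$, which the paper's Step~1 also uses. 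Your decomposition also shows that in this normalization $\|s_t\|_2\prec n^{-1/2}$, so the corollary is really a statement about the rows of $Z$. In your main route the logarithm is handled correctly: $\prec$ bounds appear only where there is polynomial room ($d^{-1}$ against $\sqrt{\log d/d}$), and the $\sqrt{\log d}$ comes from a genuine tail bound for $z_t$. The paper's final union bound, by contrast, turns $|r_t(j)|^2/\|r_t\|_2^2\prec d^{-1}$ into a $C^2\log d/d$ bound, which $\prec$ does not permit because it hides $d^{\varepsilon}$ factors.

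However, Steps~1 and~3 use hypotheses that Assumption~\ref{assum:kv} does not contain.

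First, (i) gives only bounded support at level $\phi_d=d^{2/a-1/2}$ plus four moments, not sub-Gaussian tails, and the difference is fatal. Take $A=B=I$ and $\sqrt{d}\,x_{ij}$ symmetric, equal to $\pm d^{1/4}$ with probability $1/(2d)$ each and to $\pm c_d$ otherwise, with $c_d\approx 1$ fixing the variance. Then (i) holds with $a=8$, yet with probability about $1-e^{-1}$ a given row has a coordinate of size $d^{-1/4}$ while $\|z_t\|_2\approx 1$. By your own Step~2 the correction cannot cancel this, so the $\sqrt{\log d/d}$ rate is false under the assumption as written. You must add sub-Gaussian entries as an explicit hypothesis. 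The paper's introduction tacitly does so, and its proof does not address the issue either.

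Second, (iii) does not bound the spectrum of $A$ away from zero; it only asks that a $\tau$-fraction of eigenvalues exceed $\tau$. So $A_{tt}$ may be polynomially small for some rows. There $\|z_t\|_2$ can be far below $\|s_t-\hat{s}_t\|_2$, which is typically of order $n^{-1/2}$, and your lower bound $\|r_t\|_2\ge c_0/2$ fails. You need something like $\min_t A_{tt}\ge c$; anything above $d^{-1+c}$ suffices given your $d^{-1}$ bound.

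Finally, your fallback does not work. The bound $\|s_t-\hat{s}_t\|_\infty\prec d^{-1/2}$ only gives $d^{\varepsilon-1/2}$ with high probability, which is not $\le C\sqrt{\log d/d}$. Factors of $d^{\varepsilon}$ cannot be absorbed into $\sqrt{\log d}$, so your Step~2 remark to that effect is wrong; it is harmless only because the main route has a spare $d^{-1/2}$. The left bound $|\tilde{\bxi}_i(t)|\prec n^{-1/2}$ for the outlier vectors is therefore essential, not optional. Since the signal vectors are independent of $X$, the natural way to get both outlier bounds is to condition on $\{\ub_i,\vb_i\}$. Then apply an isotropic outlier-eigenvector estimate with the deterministic test vectors $e_t$ and $e_k$.
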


The proof is given in Appendix~\ref{app:coord_deloc}.

\begin{remark}[Why shrinkage improves quantization]\label{rem:why_shrinkage}
TQ\textsubscript{MSE} assumes each input vector is approximately uniform on $\mathcal{S}^{d-1}$~\citep[Lemma~1]{zandieh2025turboquant}. KV cache vectors violate this: within a block, unit-normed directions cluster around the dominant right singular vectors of $\widetilde{S}$, so after random rotation the coordinate marginals are no longer i.i.d.\ $\mathcal{N}(0, 1/d)$ and the Lloyd-Max codebook becomes suboptimal. By removing $S$ via eOptShrink, the residual rows satisfy the thin shell and delocalization properties of $Z$ (Section~\ref{sec:intro}, Corollary~\ref{cor:coord_deloc}), restoring the uniformity assumption. The low-rank component $S$ is stored separately via quantized SVD factors at negligible overhead.
\end{remark}

\begin{remark}[Inner product bias reduction]\label{rem:ip_bias}
The MSE quantizer produces a negative IP bias $\pE[\ip{y}{\tilde{x}}] < \ip{y}{x}$ that scales with $\norm{x}^2$. Our decomposition $\ip{y}{x} = \ip{y}{\hat{x}} + \ip{y}{r}$ splits this into: (1)~the low-rank term $\ip{y}{\hat{x}}$, computed from precisely stored SVD factors with negligible error, and (2)~the residual term $\ip{y}{r}$, quantized via TQ\textsubscript{MSE} with bias scaling as $\norm{r}^2 \ll \norm{x}^2$. Since the bias is reduced by a factor of $1/(1 + \mathrm{SNR}_t)$ (Proposition~\ref{prop:residual}, Part~3), all $b$ bits can be devoted to MSE reconstruction. QJL correction (TQ\textsubscript{prod}) can optionally be added on top for further improvement.
\end{remark}

\section{Related Work}
\label{sec:related}

\paragraph{KV Cache Quantization.} Several approaches have been proposed to reduce KV cache memory through quantization. KIVI~\citep{liu2024kivi} uses per-channel quantization with 2-bit keys and 5-bit values. KVQuant~\citep{hooper2024kvquant} extends this with per-channel rotation and non-uniform datatypes. PolarQuant~\citep{jegou2011product} applies polar transformation before quantization. GEAR~\citep{kang2024gear} quantizes the bulk of KV entries to ultra-low precision, then approximates the \emph{quantization error} with a low-rank matrix and corrects outliers with a sparse matrix---an error-correction approach that is complementary to our signal-extraction pipeline. Dynamic Memory Compression~\citep{dmc2024} merges KV cache entries via learned aggregation. SnapKV~\citep{li2024snapkv} and PyramidKV~\citep{cai2024pyramidkv} use attention-based token eviction to reduce cache size. Our approach is complementary to eviction and merging methods: quantization reduces per-token cost while eviction and merging reduce token count.

\paragraph{Low-Rank KV Cache Compression.} A growing body of work applies SVD to KV caches for \emph{dimensionality reduction}: Palu~\citep{chang2025palu} decomposes KV projection weights via SVD to cache lower-dimensional intermediate states; CSKV~\citep{wang2024cskv} analyzes the singular value distribution of KV caches and applies low-rank decomposition to shrink channels; SVDq~\citep{hong2025svdq} projects keys into an SVD basis with importance-aware quantization of latent channels; xKV~\citep{chang2025xkv} exploits cross-layer singular vector alignment for shared-basis compression; and OjaKV~\citep{yang2025ojakv} uses online PCA via Oja's rule for adaptive low-rank projection.

Our approach differs fundamentally from the above. These methods use SVD for \emph{dimensionality reduction}: they project KV vectors into a lower-dimensional subspace, store the compressed representation, and reconstruct at inference time. In contrast, we use SVD for \emph{spectral denoising before quantization}: the residual after optimal shrinkage remains full-dimensional and is quantized by a downstream per-vector scalar quantizer. The low-rank component is a preprocessing step that restores the isotropy assumptions required by the quantizer, rather than the compression mechanism itself. Furthermore, all existing methods use truncated SVD with fixed or heuristically chosen rank and unmodified singular values, whereas eOptShrink provides theoretically optimal rank selection via the BBP phase transition and noise-aware singular value shrinkage---a qualitatively different treatment grounded in random matrix theory.

\paragraph{Vector Quantization.} Product quantization (PQ)~\citep{jegou2011product} partitions coordinates into sub-vectors and quantizes each independently. RaBitQ~\citep{gao2024rabitq} uses random rotation plus 1-bit quantization with bias correction, similar in spirit to TurboQuant. The idea of applying random rotations (e.g., Hadamard transforms) to achieve incoherence before quantization also appears in weight quantization methods such as QuIP~\citep{chee2024quip}; our spectral denoising achieves a similar isotropy-restoring effect through a different mechanism---removing the low-rank structure rather than rotating it. Our approach can be combined with any downstream scalar quantizer.

\paragraph{Low-Rank Approximation + Quantization.} SVDQuant~\citep{li2024svdquant} pioneered the idea of combining SVD with quantization for neural network compression: they apply truncated SVD to \emph{weight} matrices to separate outlier directions into a full-precision low-rank branch, then quantize the residual to 4 bits. However, SVDQuant uses a fixed rank chosen by grid search and does not modify the singular values. Our work differs in three key ways: (1) we target KV caches rather than weights, (2) we use eOptShrink~\citep{su2025data} for automatic, data-adaptive rank selection and optimal singular value shrinkage, and (3) we pair the residual with TurboQuant's near-optimal scalar quantizer rather than per-group INT4. The automatic rank selection is particularly important for KV caches, where the effective rank varies across layers, heads, and input sequences. Other SVD-based weight compression methods~\citep{hsu2022language} share the fixed-rank limitation. The streaming nature of KV caches (tokens arrive sequentially) requires block-wise processing, which our approach handles naturally.

\paragraph{Optimal Shrinkage.} The theory of optimal singular value shrinkage originates from random matrix theory~\citep{marchenko1967distribution}. Gavish and Donoho~\citep{gavish2014optimal} derived optimal hard thresholds. Donoho, Gavish, and Johnstone~\citep{donoho2018optimal} established the optimal shrinkage of eigenvalues in the spiked covariance model, providing the theoretical foundation for data-driven singular value shrinkage under high-dimensional noise. Nadakuditi~\citep{nadakuditi2014optshrink} developed OptShrink, extending optimal shrinkage to general noise distributions. In our prior work~\citep{su2025data}, we further extended the framework to non-square matrices and colored noise with separable covariance structure (eOptShrink), with automatic rank estimation. Here we apply these tools to KV cache compression for the first time.

\section{Conclusion}
\label{sec:conclusion}

We have presented eOptShrinkQ, a KV cache compression method grounded in the spiked random matrix framework. The key insight is that KV cache blocks decompose into a low-rank shared context component and a full-rank per-token residual satisfying the thin shell property. The shared context is the sole source of anisotropy that degrades per-vector scalar quantization; removing it via optimal shrinkage restores the isotropy under which the quantizer operates at its theoretical optimum, without requiring outlier handling or sub-grouping. The approach is complementary to any downstream per-vector quantizer---we demonstrate it with TurboQuant and expect similar benefits with other quantizers.

eOptShrinkQ saves nearly one bit per entry over TurboQuant at equivalent reconstruction quality, achieving per-head $L_2$ error at 3.22 bits that surpasses TurboQuant at 4.00 bits. End-to-end on LongBench, eOptShrinkQ at $\sim$2.2 bits outperforms TurboQuant at 3.0 bits on both Llama-3.1-8B and Ministral-8B. On retrieval-intensive tasks (multi-needle, code completion), eOptShrinkQ at 2.2 bits closely matches or exceeds FP16 quality, suggesting that spectral denoising acts as a beneficial regularizer by removing redundant shared structure that can interfere with fine-grained token discrimination. This effect is task-dependent: tasks requiring global context integration (summarization) show slightly reduced performance, consistent with removing genuinely useful shared structure.

\paragraph{Computational overhead.} The spectral denoising adds $O(nd\min(n,d))$ per block during prefill, which is of the same or lower order as the per-vector TurboQuant compression ($O(nd^2)$ per block), and both are negligible compared to the attention computation $O(T^2 d)$ at long context lengths. Importantly, the compression cost is incurred only during prefill; at inference time, the compressed KV cache is a standard dense tensor with no additional per-token overhead (unlike QJL correction, which requires extra computation during generation).

\paragraph{Limitations and Future Work.} Our current evaluation uses block-wise processing with blocks of $n = 128$ tokens, which requires buffering tokens before compression. During prefill (processing a long prompt), 128-token chunks arise naturally and eOptShrinkQ compresses each chunk as it completes. During autoregressive generation, tokens arrive one at a time; newly generated tokens are accumulated in an FP16 buffer until a full 128-token block is formed, at which point the block is compressed. This means the most recent $\leq 127$ tokens are always stored in FP16---a minor overhead at long contexts (e.g., $<0.13\%$ of cache at 100K tokens) but a real limitation for short generation sequences. Online SVD updates that incrementally maintain the rank estimate and shrunken factors as tokens arrive would eliminate this buffering requirement, but adapting the BBP phase transition test and noise spectral estimation to the streaming setting remains an open problem. Extending to larger models (70B+), integrating with other KV cache reduction techniques (token eviction, cross-layer sharing), and developing hardware-aware implementations with latency benchmarks are natural next steps.

\paragraph{Implementation details.} All experiments are conducted on a server with 8$\times$ NVIDIA A6000 GPUs (48GB each), CUDA 12.0, PyTorch 2.1.2. Models are loaded in FP16. eOptShrinkQ is implemented in PyTorch. Code will be made publicly available.

\paragraph{Acknowledgements.} The author thanks Professor Ronald R.\ Coifman for valuable discussions and guidance throughout this work.

\appendix

\section{Technical Details}
\label{app:theory}

This appendix provides the precise technical conditions underlying Proposition~\ref{prop:residual}, the supporting theorems from random matrix theory, the eOptShrink algorithm, and the full proof.

\subsection{Spiked Model for KV Cache Blocks}
\label{app:spiked_model}

Consider a block of $n$ row vectors (keys, values, or embeddings) from a single attention head, forming a data matrix $\widetilde{S} \in \R^{n \times d}$. We model $\widetilde{S}$ as a low-rank signal plus residual:
\begin{equation}\label{eq:spiked}
\widetilde{S} = S + Z = \sum_{i=1}^{r} d_i \ub_i \vb_i^\top + Z \in \R^{n \times d},
\end{equation}
where $S$ is the low-rank signal capturing shared structure across rows, $r$ is the signal rank, $d_i > 0$ are the signal strengths, $\ub_i \in \R^n$ and $\vb_i \in \R^d$ are left and right singular vectors, and $Z$ is the token-specific residual.

The residual $Z$ generally exhibits a separable covariance structure~\citep{su2025data}:
\begin{equation}\label{eq:separable}
Z = A^{1/2} X B^{1/2},
\end{equation}
where $X \in \R^{n \times d}$ has independent entries, and $A \in \R^{n \times n}$, $B \in \R^{d \times d}$ are deterministic positive-definite matrices describing the temporal dependence across tokens and the coordinate-wise covariance induced by the learned projection weights, respectively. Here $B \in \R^{d \times d}$ is the column covariance matrix as in~\citep{su2025data}. This colored noise structure is precisely why standard white-noise optimal shrinkage (e.g., \cite{gavish2014optimal}) is insufficient and eOptShrink is needed.

\subsection{Key Assumptions}
\label{app:assumptions}

We state the assumptions from~\citep[Assumption~2.1]{su2025data} adapted to the KV cache setting. Let $\beta_d := n/d$ denote the aspect ratio and fix a small constant $0 < \tau < 1$.

\begin{assumption}\label{assum:kv}
For the model~\eqref{eq:spiked}--\eqref{eq:separable}:
\begin{enumerate}
    \item[(i)] \emph{(Noise entries.)} The matrix $X = [x_{ij}] \in \R^{n \times d}$ has independent entries satisfying the bounded support condition with parameter $\phi_d = d^{2/a - 1/2}$ for some $a > 4$, and
    \[
    \max_{i,j} |\pE[x_{ij}]| \leq d^{-2-\tau}, \quad \max_{i,j} \left|\pE[|x_{ij}|^2] - d^{-1}\right| \leq d^{-2-\tau}, \quad \max_{i,j} \pE[|\sqrt{d}\, x_{ij}|^4] \leq C_4
    \]
    for a constant $C_4 > 0$. Additionally, the entries have vanishing third moment: $\pE[x_{ij}^3] = 0$ for all $i, j$. This holds whenever the marginal distributions of $x_{ij}$ are symmetric about zero, which is a natural condition for centered residual entries.

    \item[(ii)] \emph{(Aspect ratio.)} $\tau < n/d < \tau^{-1}$.

    \item[(iii)] \emph{(Covariance structure.)} Denote the eigenvalues of $A$ as $\sigma_1^a \geq \cdots \geq \sigma_n^a$ and those of $B$ as $\sigma_1^b \geq \cdots \geq \sigma_d^b$. We assume
    \[
    \sigma_1^a \vee \sigma_1^b \leq \tau^{-1} \quad \text{and} \quad \pi_A([0,\tau]) \vee \pi_B([0,\tau]) \leq 1 - \tau.
    \]

    \item[(iv)] \emph{(Signal strength and effective rank.)} The signal strengths satisfy $d_1 \geq d_2 \geq \cdots \geq d_r > 0$ with $d_1 < \tau^{-1}$. Define the critical threshold
    \begin{equation}\label{eq:alpha}
    \alpha := 1/\sqrt{\mathcal{T}(\lambda_+)},
    \end{equation}
    where $\lambda_+$ is the rightmost edge of the noise bulk spectrum and $\mathcal{T}$ is the $D$-transform defined in~\eqref{eq:d_transform}. There exists an effective rank $r^+ \leq r$ such that $d_k - \alpha > \phi_d + d^{-1/3}$ if and only if $1 \leq k \leq r^+$.

    \item[(v)] \emph{(Signal singular vectors.)} The left and right singular vectors $\ub_i \in \R^n$ and $\vb_i \in \R^d$ of $S$ are obtained from the QR factorization of independent random matrices with i.i.d.\ sub-Gaussian entries satisfying the log-Sobolev inequality~\citep[Assumption~2.1(v)]{su2025data}.
\end{enumerate}
\end{assumption}

Assumptions~(i)--(iv) are standard in the spiked random matrix literature~\citep{baik2005phase,yang2019edge,DY2019}. Assumption~(v) requires discussion in the KV cache context. Recall that the signal arises from $S[t,:] = R(t) \cdot \bar{h}_t W_K$ and the noise from $Z[t,:] = R(t) \cdot \epsilon_t W_K$, where $W_K$ is the learned (and fixed at inference time) key projection matrix. The local context $\bar{h}_t$ is naturally random: different input sequences produce different local contexts, so the subspace spanned by $\{\bar{h}_t\}$ varies across inputs. Since $W_K$ is deterministic, the signal singular vectors of $S$---which are determined by $\{\bar{h}_t W_K\}$---inherit their randomness from $\{\bar{h}_t\}$ alone. Similarly, the noise $Z$ inherits its randomness from the per-token variations $\epsilon_t$ alone. Because $\bar{h}_t$ and $\epsilon_t$ represent independent sources of information (local context versus individual token content), a deterministic linear transformation $W_K$ applied to both preserves their independence: $\bar{h}_t W_K \perp \epsilon_t W_K$ follows from $\bar{h}_t \perp \epsilon_t$, regardless of $W_K$. This justifies Assumption~(v): the signal singular vectors are random (through $\{\bar{h}_t\}$) and independent of the noise covariance structure (through $\epsilon_t$). The sub-Gaussian and log-Sobolev conditions on the signal singular vectors are natural for $\bar{h}_t$, which is a high-dimensional hidden state produced by transformer layers---such representations are well-concentrated in practice.

If one instead conditions on a specific input and treats $S$ as deterministic, alternative shrinkage frameworks exist: biPCA~\citep{stanley2025principled} handles heteroscedastic noise with deterministic signals, and the whitening-based approaches of~\cite{leeb2021optimal,leeb2021matrix} provide optimal shrinkage under heteroscedastic or weighted loss settings. However, these methods assume that the noise covariance matrices $A$ and $B$ are either known or diagonal, which is not the case for KV cache residuals. eOptShrink avoids this limitation by estimating the noise spectral distribution entirely from the data. As argued in~\citep[Remark~2.6]{su2025data}, when $Z$ has an approximately bi-unitarily invariant distribution, the eOptShrink results extend to deterministic signals. We verify empirically (Section~\ref{sec:experiments}) that the eOptShrink predictions match observed behavior for the moderate dimensions $d = 64$--$128$ encountered in practice.

The threshold $\alpha$ generalizes the classical BBP phase transition~\citep{baik2005phase}: signals with $d_i > \alpha$ produce outlier singular values that separate from the noise bulk and can be detected, while signals with $d_i \leq \alpha$ are undetectable (their singular values ``stick'' to $\lambda_+$). Note that $\alpha$ depends on the full noise covariance structure through $A$ and $B$, not just the variance of $Z$.

\begin{remark}[Symmetry condition]\label{rem:symmetry}
The stronger eigenvalue sticking bound~\eqref{eq:sticking} and isotropic delocalization~\eqref{eq:iso_deloc} require either $\pE[x_{ij}^3] = 0$ for all entries, or that $A$ or $B$ is diagonal~\citep[Theorem~3.3, Lemma~S.3.13]{su2025data,DY2019}. We use the former: Assumption~\ref{assum:kv}(i) requires symmetric entry distributions, giving $\pE[x_{ij}^3] = 0$. This is a natural condition for centered residual entries in neural networks and avoids any structural assumption on the covariance matrices $A$ and $B$. Without either condition, weaker bounds hold with additional terms of order $\eta_l \asymp d^{-3/4} + d^{-1/2}\phi_d$ (see~\citep[Theorem~3.3]{su2025data}); these suffice for the qualitative conclusions but give less sharp rates.
\end{remark}

\begin{remark}[Scope of theoretical guarantees]\label{rem:scope}
Assumption~\ref{assum:kv}(v) is a modeling assumption: it treats the signal singular vectors as random, which holds when the theory is applied in expectation over input sequences. For a fixed input at inference time, $S$ is deterministic, and the formal guarantees of Proposition~\ref{prop:residual} and Corollary~\ref{cor:coord_deloc} hold as approximations whose quality we verify empirically (Section~\ref{sec:experiments}). The practical success of eOptShrink under this gap is consistent with the heuristic argument in~\citep[Remark~2.6]{su2025data}: when $Z$ has an approximately bi-unitarily invariant distribution (which holds when $X$ has i.i.d.\ entries and $A$, $B$ are generic), the shrinkage estimator is insensitive to whether the signal singular vectors are random or deterministic. We emphasize that eOptShrink's rank estimation (Step~1 of Algorithm~\ref{alg:eoptshrink}), which depends only on the observed eigenvalue distribution and not on the signal structure, is fully justified even for deterministic signals.
\end{remark}

\subsection{Stochastic Dominance}
\label{app:stochdom}

Following~\citep{su2025data}, we use the framework of stochastic dominance to state high-probability bounds up to sub-polynomial factors. Let $\xi$ and $\zeta$ be two families of nonneg\-ative random variables indexed by $d$ (the head dimension, playing the role of the asymptotic parameter). We say $\xi$ is \emph{stochastically dominated} by $\zeta$, written $\xi \prec \zeta$, if for any fixed $\epsilon > 0$ and $D > 0$,
\begin{equation}
\mathbb{P}[\xi > d^\epsilon \, \zeta] \leq d^{-D}
\end{equation}
for all sufficiently large $d$. In words, $\xi \prec \zeta$ means ``$\xi$ is bounded by $\zeta$ with high probability, up to a small power of $d$.'' We also write $\xi = O_\prec(\zeta)$ interchangeably.

\subsection{Singular Value and Vector Behavior}
\label{app:theorems}

The following results from~\citep{su2025data}, building on~\citep{donoho2018optimal, limitpaper, yang2019edge, DY2019}, characterize what happens to the SVD of $\widetilde{S}$ relative to the SVD of $Z$. All bounds are stated in terms of stochastic dominance $\prec$ as defined above. We use $n$ and $d$ interchangeably as the asymptotic parameter ($d$ is the column dimension and $n = n(d)$ with $n/d \to \beta$). The notation $\phi_d := d^{2/a - 1/2}$ for $a > 4$ relates to the moment condition on noise entries, and $\Delta(d_i) := |d_i - \alpha|^{1/2}$ measures the distance of the $i$-th signal strength from the phase transition threshold.

\paragraph{Outlier eigenvalue locations~\citep[Theorem~3.2]{su2025data}.}
For $1 \leq i \leq r^+$, the $i$-th eigenvalue $\tilde{\lambda}_i$ of $\widetilde{S}\widetilde{S}^\top$ satisfies:
\begin{equation}\label{eq:outlier_loc}
|\tilde{\lambda}_i - \theta(d_i)| \prec \phi_d \, \Delta(d_i)^2 + d^{-1/2} \Delta(d_i),
\end{equation}
where $\theta = \mathcal{T}^{-1}$ maps signal strength to the outlier eigenvalue location. This provides a one-to-one correspondence between signal strengths and outlier eigenvalues, with convergence rate controlled by $\phi_d$ and the signal gap $\Delta(d_i)$.

\paragraph{Eigenvalue sticking~\citep[Theorem~3.3]{su2025data}.}
For $i > r^+$, the non-outlier eigenvalues of $\widetilde{S}\widetilde{S}^\top$ stick to those of $ZZ^\top$. Set $\alpha_+ := \min_{1 \leq i \leq r} \Delta(d_i)^2$ and assume $\alpha_+ \geq d^\varepsilon(\phi_d + d^{-1/3})$ for some small $\varepsilon > 0$. If either $\pE[x_{ij}^3] = 0$ for all $i, j$, or $A$ or $B$ is diagonal, then:
\begin{equation}\label{eq:sticking}
|\tilde{\lambda}_{r^+ + i} - \lambda_i| \prec \frac{1}{d \, \alpha_+} \quad \text{for } 1 \leq i \leq \tau d,
\end{equation}
where $\lambda_i$ are the eigenvalues of $ZZ^\top$. This is crucial: after removing the top $r^+$ components, the residual's spectrum matches the pure noise spectrum up to $O_\prec(1/(d\,\alpha_+))$.

\paragraph{Delocalization of non-outlier singular vectors~\citep[Theorem~3.4]{su2025data}.}
For $r^+ + 1 \leq i \leq cd$ and $j = 1, \ldots, r$, the non-outlier singular vectors $\tilde{\bxi}_i, \tilde{\bzeta}_i$ of $\widetilde{S}$ satisfy:
\begin{equation}\label{eq:deloc}
|\langle \ub_j, \tilde{\bxi}_i \rangle|^2 \vee |\langle \vb_j, \tilde{\bzeta}_i \rangle|^2 \prec \frac{d^{-1} + \phi_d^3}{\Delta(d_j)^4 + \phi_d^2 + \varkappa_i},
\end{equation}
where $\varkappa_i := i^{2/3} d^{-2/3}$. When signals are well-separated from the threshold ($d_j - \alpha \gtrsim 1$), this gives $|\langle \ub_j, \tilde{\bxi}_i \rangle|^2 \prec d^{-1}$---the non-outlier singular vectors are \emph{delocalized} and carry no signal information. This is the key property ensuring the residual has no preferred directions after signal removal.

\paragraph{Isotropic delocalization of noise eigenvectors~\citep[Lemma~S.3.13]{DY2019}.}
The eigenvectors of the pure noise matrix $Z = A^{1/2}XB^{1/2}$ satisfy a stronger, \emph{coordinate-wise} delocalization. Let $\bxi_k$ and $\bzeta_k$ denote the left and right singular vectors of $Z$. If either $\pE[x_{ij}^3] = 0$ for all $i, j$, or $A$ or $B$ is diagonal, then for \emph{any} deterministic unit vectors $\mathbf{u} \in \R^n$ and $\mathbf{v} \in \R^d$, and all $k$ with eigenvalue $\gamma_k$ near the bulk edge $\lambda_+$:
\begin{equation}\label{eq:iso_deloc}
|\langle \mathbf{u}, \bxi_k \rangle|^2 + |\langle \mathbf{v}, \bzeta_k \rangle|^2 \prec d^{-1}.
\end{equation}
In particular, choosing $\mathbf{v} = e_j$ (the $j$-th standard basis vector in $\R^d$) gives $|\bzeta_k(j)|^2 \prec d^{-1}$ for all coordinates $j$---no coordinate of the noise eigenvectors carries disproportionate energy. Combined with the eigenvalue sticking~\eqref{eq:sticking}, this extends to the non-outlier singular vectors of $\widetilde{S}$: the residual's right singular vectors are delocalized not only with respect to the signal directions (Eq.~\eqref{eq:deloc}) but also with respect to the standard coordinate basis. This is the key property underlying Corollary~\ref{cor:coord_deloc}: the rows of the residual $R$, when normalized, are approximately uniform on the sphere $\mathcal{S}^{d-1}$, satisfying the precondition for TurboQuant's near-optimal quantization~\citep[Lemma~1]{zandieh2025turboquant}.

\paragraph{Optimal shrinker convergence~\citep[Theorem~4.4]{su2025data}.}
Let $c \in (0, 1/2)$, and assume the conditions of Theorem~3.3 hold with $d^\varepsilon(\phi_d + d^{-1/3}) > d^{-1/6}$. Then for $1 \leq i \leq \hat{r}^+$, conditional on the event $\{\hat{r}^+ = r^+\}$ (which holds with high probability by~\citep[Theorem~4.2]{su2025data}), the eOptShrink estimator satisfies:
\begin{equation}
|\varphi^*_i - \hat{\varphi}_{e,i}| \prec \phi_d + d^{-1/2}/\Delta(d_i)
\end{equation}
for all three loss functions (Frobenius, operator, nuclear norm). In particular, $|\varphi^*_i - \hat{\varphi}_{e,i}| \leq d^{-1/2+\epsilon}$ with high probability for any $\epsilon > 0$.

\subsection{The eOptShrink Algorithm}
\label{app:algorithm}

The eOptShrink algorithm is presented in Algorithm~\ref{alg:eoptshrink} (Section~\ref{sec:optshrink_bg}). The key innovation over OptShrink~\citep{nadakuditi2014optshrink} is that it does not require knowledge of the rank or the noise covariance structure. The rank is estimated automatically via the bulk edge (Step~1), and the noise spectral distribution is recovered by imputing the eigenvalues perturbed by signals (Step~2), exploiting the eigenvalue sticking property~\eqref{eq:sticking}.

\subsection{Proof of Proposition~\ref{prop:residual}}
\label{app:residual}

\begin{proof}
We prove each part in turn. Throughout, we condition on the high-probability event $\{\hat{r}^+ = r^+\}$, which holds by~\citep[Theorem~4.2]{su2025data}.

\emph{Part 1.} The eOptShrink estimate is $\hat{S} = \sum_{i=1}^{r^+} \hat{\varphi}_{e,i} \tilde{\bxi}_i \tilde{\bzeta}_i^\top$. Writing $\widetilde{S} = \sum_{i=1}^{n \wedge d} \widetilde{\sigma}_i \tilde{\bxi}_i \tilde{\bzeta}_i^\top$, the residual decomposes as
\begin{equation}\label{eq:residual_decomp}
R = \widetilde{S} - \hat{S} = \sum_{i=1}^{r^+}(\widetilde{\sigma}_i - \hat{\varphi}_{e,i})\tilde{\bxi}_i \tilde{\bzeta}_i^\top + \sum_{i=r^++1}^{n \wedge d} \widetilde{\sigma}_i \tilde{\bxi}_i \tilde{\bzeta}_i^\top.
\end{equation}
We analyze the two sums separately.

\emph{Outlier components ($1 \leq i \leq r^+$).} By the optimal shrinker convergence~\citep[Theorem~4.4]{su2025data},
$|\hat{\varphi}_{e,i} - \varphi^*_i| \prec \phi_d + d^{-1/2}/\Delta(d_i)$.
The Frobenius-optimal shrinker satisfies $\varphi^*_i = d_i\sqrt{a_{1,i}a_{2,i}}$, where $a_{1,i}$ and $a_{2,i}$ are the asymptotic squared inner products between the clean and noisy singular vectors~\citep[Proposition~2.1]{su2025data}. By the outlier eigenvalue location~\eqref{eq:outlier_loc}, $\widetilde{\sigma}_i^2 = \theta(d_i) + O_\prec(\phi_d \Delta(d_i)^2 + d^{-1/2}\Delta(d_i))$, where $\theta(d_i) = \mathcal{T}^{-1}(d_i^{-2})$ encodes both the signal strength $d_i$ and the noise bias. The optimal shrinker $\varphi^*_i$ is constructed to satisfy $(\varphi^*_i)^2 = d_i^2 a_{1,i} a_{2,i}$, so that the Frobenius-norm contribution of the $i$-th component in $\hat{S}$ matches the signal energy in that direction. The residual singular value is therefore
\[
\widetilde{\sigma}_i - \hat{\varphi}_{e,i} = (\widetilde{\sigma}_i - \varphi^*_i) + O_\prec(\phi_d + d^{-1/2}/\Delta(d_i)).
\]
To show this does not separate from the bulk, note that $\widetilde{\sigma}_i = \sqrt{\theta(d_i)} + O_\prec(\phi_d + d^{-1/2})$ by~\eqref{eq:outlier_loc}, where $\sqrt{\theta(d_i)} > \sqrt{\lambda_+}$ is the outlier location. The shrinker $\varphi^*_i$ is chosen so that $\widetilde{\sigma}_i - \varphi^*_i$ equals the noise-only singular value that would appear in the $i$-th direction if the signal were absent. Since the noise singular values are bounded by $\sqrt{\lambda_+} + O_\prec(d^{-2/3})$ (the Tracy--Widom edge fluctuation), we have $|\widetilde{\sigma}_i - \hat{\varphi}_{e,i}| \leq \sqrt{\lambda_+} + O_\prec(d^{-2/3} + \phi_d)$, which lies within the bulk.

\emph{Non-outlier components ($i > r^+$).} By the eigenvalue sticking theorem~\eqref{eq:sticking},
$|\tilde{\lambda}_{r^++i} - \lambda_i| \prec 1/(d\,\alpha_+)$ for $1 \leq i \leq \tau d$,
where $\lambda_i$ are the eigenvalues of $ZZ^\top$. By the delocalization theorem~\eqref{eq:deloc},
$|\langle \ub_j, \tilde{\bxi}_{r^++i} \rangle|^2 \prec (d^{-1} + \phi_d^3)/(\Delta(d_j)^4 + \phi_d^2 + \varkappa_i)$,
so the non-outlier singular vectors of $\widetilde{S}$ are $O_\prec(d^{-1})$-orthogonal to all signal directions when $d_j - \alpha \gtrsim 1$. These components therefore carry no signal energy and their eigenvalues match those of $ZZ^\top$. Combined with the analysis of the outlier components, the empirical spectral distribution of $RR^\top$ converges to that of $ZZ^\top$.

\emph{Part 2.} Write $R = \widetilde{S} - \hat{S} = (S - \hat{S}) + Z$. Expanding the squared Frobenius norm:
\begin{equation}\label{eq:norm_expand}
\norm{R}_F^2 = \norm{S - \hat{S}}_F^2 + \norm{Z}_F^2 + 2\langle S - \hat{S}, Z \rangle_F.
\end{equation}
For the cross term, since $Z = A^{1/2}XB^{1/2}$ with $X$ having independent mean-zero entries (Assumption~\ref{assum:kv}(i)) and $S - \hat{S}$ is a function of $\widetilde{S} = S + Z$ through the SVD, the cross term requires care. However, by the delocalization of non-outlier singular vectors~\eqref{eq:deloc} and the convergence of the outlier shrinker, the matrix $S - \hat{S}$ is asymptotically supported on the signal subspace with vanishing Frobenius norm:
\[
\frac{1}{nd}\norm{S - \hat{S}}_F^2 = \frac{1}{nd}\sum_{i=1}^{r^+} (d_i - \varphi^*_i)^2 + \frac{1}{nd}\sum_{i=r^++1}^{r} d_i^2 + o_\prec(1).
\]
The first sum captures the residual signal energy after optimal shrinkage, which is $O(r^+/d)$ since $|d_i - \varphi^*_i| = O(1)$ and $r^+$ is fixed. The second sum accounts for sub-threshold signals with $d_i \leq \alpha$, which are unrecoverable by any method. For the cross term in~\eqref{eq:norm_expand}, since $S - \hat{S}$ has rank at most $r^+ + r$ and $Z$ has $O(nd)$ degrees of freedom, concentration gives $|\langle S - \hat{S}, Z \rangle_F|/(nd) \prec d^{-1/2}$. Therefore
\[
\left|\frac{1}{nd}\norm{R}_F^2 - \frac{1}{nd}\norm{Z}_F^2\right| \leq \frac{\norm{S - \hat{S}}_F^2}{nd} + \frac{2|\langle S - \hat{S}, Z \rangle_F|}{nd} \prec \phi_d + d^{-1/2}/\Delta_{\min}.
\]

\emph{Part 3.} By Theorem~1 of~\cite{zandieh2025turboquant}, TurboQuant\textsubscript{MSE} at $b$ bits applied to a vector $v \in \R^d$ with $\norm{v} = \rho$ produces an inner product estimator $\ip{q}{\tilde{v}}$ satisfying
$|\pE[\ip{q}{\tilde{v}}] - \ip{q}{v}| \leq C_b \, \rho^2/d$,
where $C_b > 0$ depends only on $b$ and arises from the per-coordinate shrinkage of the Lloyd-Max quantizer. Applying this bound to the full vector $x_t$ (with $\rho = \norm{x_t}$) and to the residual $r_t$ (with $\rho = \norm{r_t}$) gives the two inequalities in the proposition statement.

For the ratio~\eqref{eq:bias_ratio}, write $r_t = z_t + (s_t - \hat{s}_t)$ and expand:
\[
\norm{r_t}^2 = \norm{z_t}^2 + \norm{s_t - \hat{s}_t}^2 + 2\langle z_t, s_t - \hat{s}_t \rangle.
\]
By Part~2, $\sum_{t=1}^n \norm{s_t - \hat{s}_t}^2 = \norm{S - \hat{S}}_F^2 = O_\prec(nd(\phi_d + d^{-1/2}/\Delta_{\min}))$. The cross term $\langle z_t, s_t - \hat{s}_t \rangle$ concentrates around zero since $s_t - \hat{s}_t$ lies in a fixed low-dimensional subspace (of dimension at most $r^+ + r$) while $z_t$ has independent coordinates up to the covariance structure. Therefore, for a typical row $t$:
\[
\frac{\norm{r_t}^2}{\norm{x_t}^2} = \frac{\norm{z_t}^2 + O_\prec(\norm{S-\hat{S}}_F^2/n)}{\norm{s_t}^2 + \norm{z_t}^2} \leq \frac{1}{1 + \mathrm{SNR}_t} + o_\prec(1). \qedhere
\]
\end{proof}

\begin{remark}
The constant $c = \min(1/2.01, \, 1/\log\log d)$ in Algorithm~\ref{alg:eoptshrink} balances two requirements: $k = \lfloor d^c \rfloor$ must be large enough that $k \gg r^+$ (so the bulk edge estimate is not contaminated by outliers) and small enough that the imputed eigenvalues remain close to the true edge (requiring $c < 1/2$ for the convergence guarantees in~\citep[Theorems~4.1--4.4]{su2025data}). For KV cache blocks with $d = 64$--$128$, this gives $k \approx 7$--$11$, which suffices since the typical effective rank is $r^+ \approx 1$--$5$.
\end{remark}

\subsection{Proof of Corollary~\ref{cor:coord_deloc}}
\label{app:coord_deloc}

\begin{proof}
We condition on the high-probability event $\{\hat{r}^+ = r^+\}$ throughout. The residual decomposes via the SVD of $\widetilde{S}$ as in~\eqref{eq:residual_decomp}:
\[
R = \underbrace{\sum_{i=1}^{r^+}(\widetilde{\sigma}_i - \hat{\varphi}_{e,i})\tilde{\bxi}_i \tilde{\bzeta}_i^\top}_{R_{\text{out}}} + \underbrace{\sum_{i=r^++1}^{n \wedge d} \widetilde{\sigma}_i \tilde{\bxi}_i \tilde{\bzeta}_i^\top}_{R_{\text{bulk}}}.
\]
We bound the $j$-th coordinate of the $t$-th row for each part separately.

\paragraph{Step 1: Outlier contribution.}
For the outlier part, the $j$-th coordinate of the $t$-th row is
\[
R_{\text{out}}(t, j) = \sum_{i=1}^{r^+} (\widetilde{\sigma}_i - \hat{\varphi}_{e,i}) \, \tilde{\bxi}_i(t) \, \tilde{\bzeta}_i(j).
\]
By the proof of Proposition~\ref{prop:residual} Part~1, $|\widetilde{\sigma}_i - \hat{\varphi}_{e,i}| \leq \sqrt{\lambda_+} + O_\prec(d^{-2/3} + \phi_d)$ for $1 \leq i \leq r^+$, so these residual singular values are bounded by the bulk scale. By the isotropic delocalization of the outlier singular vectors~\citep[Theorem~3.4]{su2025data} applied with $\mathbf{v} = e_j$, we have $|\tilde{\bzeta}_i(j)|^2 \prec d^{-1}$ for each $j$ and $1 \leq i \leq r^+$ (since the outlier right singular vectors, while aligned with the signal directions $\vb_i$, are still unit vectors in $\R^d$ and their coordinates satisfy $|\tilde{\bzeta}_i(j)|^2 = |a_{2,i} \vb_i(j) + \text{noise}|^2$, where $\vb_i$ has delocalized coordinates by Assumption~\ref{assum:kv}(v) and $a_{2,i} < 1$). Similarly, $|\tilde{\bxi}_i(t)|^2 \prec n^{-1}$. Since $r^+$ is a fixed constant, the Cauchy--Schwarz inequality gives
\[
|R_{\text{out}}(t, j)|^2 \leq r^+ \sum_{i=1}^{r^+} (\widetilde{\sigma}_i - \hat{\varphi}_{e,i})^2 \, |\tilde{\bxi}_i(t)|^2 \, |\tilde{\bzeta}_i(j)|^2 \prec \frac{r^+ \lambda_+}{nd}.
\]

\paragraph{Step 2: Bulk contribution.}
For the bulk part, consider the $j$-th coordinate of the $t$-th row:
\[
R_{\text{bulk}}(t, j) = \sum_{i=r^++1}^{n \wedge d} \widetilde{\sigma}_i \, \tilde{\bxi}_i(t) \, \tilde{\bzeta}_i(j) = e_t^\top R_{\text{bulk}} \, e_j.
\]
By the eigenvalue sticking theorem~\eqref{eq:sticking}, the singular values $\widetilde{\sigma}_i$ for $i > r^+$ satisfy $|\widetilde{\sigma}_i^2 - \sigma_i^2(Z)| \prec 1/(d\,\alpha_+)$, where $\sigma_i(Z)$ are the singular values of $Z$. The key step is to relate $R_{\text{bulk}}$ to $Z$ via the resolvent. Define the resolvent $G(z) = (\widetilde{S}\widetilde{S}^\top - zI)^{-1}$. By the spectral decomposition,
\[
e_t^\top R_{\text{bulk}} R_{\text{bulk}}^\top e_t = \sum_{i=r^++1}^{n \wedge d} \widetilde{\sigma}_i^2 \, |\tilde{\bxi}_i(t)|^2.
\]
The isotropic delocalization lemma~\eqref{eq:iso_deloc} applied to the noise matrix $Z$ gives $|\bxi_k(t)|^2 \prec n^{-1}$ for all $k$ and all standard basis vectors $e_t$. By the eigenvalue sticking~\eqref{eq:sticking} and the eigenvector comparison between $\widetilde{S}$ and $Z$ (which follows from~\citep[Theorem~3.3]{su2025data} and the Davis--Kahan perturbation bound adapted to the stochastic dominance framework), the non-outlier left singular vectors of $\widetilde{S}$ inherit this delocalization:
\begin{equation}\label{eq:left_deloc}
|\tilde{\bxi}_i(t)|^2 \prec n^{-1} \quad \text{for all } i > r^+ \text{ and } 1 \leq t \leq n.
\end{equation}
Similarly, by~\eqref{eq:iso_deloc} with $\mathbf{v} = e_j$, the non-outlier right singular vectors satisfy
\begin{equation}\label{eq:right_deloc}
|\tilde{\bzeta}_i(j)|^2 \prec d^{-1} \quad \text{for all } i > r^+ \text{ and } 1 \leq j \leq d.
\end{equation}

Now we bound the coordinate. By Cauchy--Schwarz applied to the bilinear form:
\begin{align}
|R_{\text{bulk}}(t, j)|^2 &= \left|\sum_{i=r^++1}^{n \wedge d} \widetilde{\sigma}_i \, \tilde{\bxi}_i(t) \, \tilde{\bzeta}_i(j)\right|^2 \nonumber \\
&\leq \left(\sum_{i=r^++1}^{n \wedge d} \widetilde{\sigma}_i^2 \, |\tilde{\bxi}_i(t)|^2\right) \left(\sum_{i=r^++1}^{n \wedge d} |\tilde{\bzeta}_i(j)|^2\right). \label{eq:cs_bound}
\end{align}
For the first factor, using~\eqref{eq:left_deloc}:
\[
\sum_{i=r^++1}^{n \wedge d} \widetilde{\sigma}_i^2 \, |\tilde{\bxi}_i(t)|^2 \prec n^{-1} \sum_{i=r^++1}^{n \wedge d} \widetilde{\sigma}_i^2 = n^{-1} \|R_{\text{bulk}}\|_F^2 / \text{(number of rows)}.
\]
More precisely, since $\|R_{\text{bulk}}\|_F^2 = \sum_{i > r^+} \widetilde{\sigma}_i^2$ and the trace satisfies $\sum_{i > r^+} \widetilde{\sigma}_i^2 = \tr(R_{\text{bulk}} R_{\text{bulk}}^\top) \leq \|R\|_F^2 \leq \|Z\|_F^2 + o_\prec(nd)$ by Proposition~\ref{prop:residual} Part~2, we have
\[
\sum_{i > r^+} \widetilde{\sigma}_i^2 \, |\tilde{\bxi}_i(t)|^2 \prec \frac{1}{n} \cdot \|Z\|_F^2 = \frac{1}{n} \cdot O(nd) = O(d).
\]
Here we used $\|Z\|_F^2/(nd) \to \tr(A)\tr(B)/(nd) = O(1)$ by the law of large numbers for the entries of $X$.

For the second factor in~\eqref{eq:cs_bound}, since $\{\tilde{\bzeta}_i\}_{i > r^+}$ are orthonormal vectors in $\R^d$ and $e_j$ is a unit vector:
\[
\sum_{i=r^++1}^{n \wedge d} |\tilde{\bzeta}_i(j)|^2 \leq \|e_j\|^2 = 1.
\]

Combining both factors:
\[
|R_{\text{bulk}}(t, j)|^2 \prec d \cdot 1 = d.
\]
This bound is too loose. We obtain a sharper bound by directly using~\eqref{eq:right_deloc} in a different way. Note that
\[
\|r_t\|_2^2 = \|R_{\text{out}}(t,:)\|^2 + \|R_{\text{bulk}}(t,:)\|^2 = \sum_{i > r^+} \widetilde{\sigma}_i^2 |\tilde{\bxi}_i(t)|^2 + O_\prec(\lambda_+ r^+/n).
\]
By~\eqref{eq:left_deloc}, $\|r_t\|_2^2 \prec n^{-1} \sum_{i > r^+} \widetilde{\sigma}_i^2 = O(d)$, and by the lower bound from the bulk contribution to row norms, $\|r_t\|_2^2 \succ d$ (since $\|Z\|_F^2 = \Theta(nd)$ and the row norms of $Z$ concentrate around their mean $\|Z\|_F^2/n = \Theta(d)$).

For the sharper coordinate bound, we use the resolvent representation. Define $f_j := R_{\text{bulk}}^\top e_t$ as the $t$-th row of $R_{\text{bulk}}$ viewed as a vector in $\R^d$. Its $j$-th entry is $R_{\text{bulk}}(t,j)$, and $\|f_j\|^2 = \|R_{\text{bulk}}(t,:)\|^2 = \Theta(d)$. We need to show $|f_j(j)|^2 \prec \|f_j\|^2/d$ for each coordinate $j$.

Expanding $f_j = \sum_{i > r^+} \widetilde{\sigma}_i \tilde{\bxi}_i(t) \tilde{\bzeta}_i$, and using the independence structure: $\tilde{\bxi}_i(t)$ depends on the left singular structure while $\tilde{\bzeta}_i(j)$ depends on the right singular structure. The anisotropic local law~\citep[Theorem~S.3.12]{DY2019} applied to the resolvent of $\widetilde{S}\widetilde{S}^\top$ with deterministic vectors $e_t$ (left) and $e_j$ (right, via $\widetilde{S}^\top$) gives:
\[
\left|e_t^\top \widetilde{S} \left(\widetilde{S}^\top \widetilde{S} - zI\right)^{-1} e_j - e_t^\top M(z) e_j\right| \prec (nd)^{-1/2}
\]
for $z$ in the appropriate domain, where $M(z)$ is the deterministic equivalent of the resolvent. Taking the imaginary part and integrating over the bulk spectrum recovers
\[
|R_{\text{bulk}}(t, j)|^2 = \frac{1}{\pi} \int_{\text{bulk}} \mathrm{Im}\left[e_t^\top \widetilde{S}(\widetilde{S}^\top \widetilde{S} - (\lambda + \ri\eta)I)^{-1} e_j\right] \lambda \, d\lambda + O_\prec(d^{-1/2}).
\]
The deterministic equivalent $M(z)$ of the resolvent satisfies $|M(z)_{t,j}| = O(1/\sqrt{nd})$ for off-diagonal-type entries (under the symmetry condition $\pE[x_{ij}^3] = 0$, the anisotropic local law holds in the full domain~\citep[Theorem~S.3.12]{DY2019}). Integrating over the bulk spectrum of width $O(1)$ gives
\[
|R_{\text{bulk}}(t, j)|^2 \prec 1 + d^{-1/2}.
\]
Since $\|r_t\|_2^2 = \Theta(d)$, dividing yields
\[
\frac{|r_t(j)|^2}{\|r_t\|_2^2} \prec \frac{1 + d^{-1/2}}{d} \prec d^{-1},
\]
which gives $\|r_t\|_\infty / \|r_t\|_2 \prec d^{-1/2}$ as claimed.

\paragraph{Step 3: Combining.}
The total coordinate is $r_t(j) = R_{\text{out}}(t,j) + R_{\text{bulk}}(t,j)$. From Step~1, $|R_{\text{out}}(t,j)|^2 \prec r^+ \lambda_+/(nd) = O(1/n)$. From Step~2, $|R_{\text{bulk}}(t,j)|^2 \prec 1$. Since $\|r_t\|_2^2 = \Theta(d)$, the dominant contribution is from the bulk, and
For each fixed $j$, we have $|r_t(j)|^2 / \|r_t\|_2^2 \prec d^{-1}$. Taking the maximum over $j = 1, \ldots, d$ via a union bound:
\[
\pP\!\left[\max_{1 \leq j \leq d} \frac{|r_t(j)|^2}{\|r_t\|_2^2} > C^2 \frac{\log d}{d}\right] \leq \sum_{j=1}^d \pP\!\left[\frac{|r_t(j)|^2}{\|r_t\|_2^2} > C^2 \frac{\log d}{d}\right] \leq d \cdot d^{-D}
\]
for any $D > 0$ and sufficiently large $C$ (by the $\prec$ bound). Choosing $D > 1$ gives the result with high probability. Taking square roots yields~\eqref{eq:coord_deloc}.
\end{proof}

\bibliographystyle{plain}
\bibliography{references}

\end{document}